\newenvironment{noindlist}
 {\begin{list}{\labelitemi}{\leftmargin=0em \itemindent=1em}}
 {\end{list}}
\title{A Precise Characterization of SGD Stability Using Loss Surface Geometry}
\author{Gregory Dexter\textsuperscript{\normalfont{1}},  Borja Ocejo\textsuperscript{\normalfont{2}}, Sathiya Keerthi\textsuperscript{\normalfont{2}}, Aman Gupta\textsuperscript{\normalfont{2}},\\\textbf{Ayan Acharya\textsuperscript{\normalfont{2}} \& Rajiv Khanna\textsuperscript{\normalfont{1}}}
\thanks{Correspondence to Rajiv Khanna \{\textit{rajivak@purdue.edu}\}} \\
\textsuperscript{1} Purdue University, West Lafayette, IN\\ 
\textsuperscript{2} LinkedIn, Sunnyvale, CA \\
}
\begin{document}

\maketitle

\begin{abstract}
  Stochastic Gradient Descent (SGD) stands as a cornerstone optimization algorithm with proven real-world empirical successes but relatively limited theoretical understanding. Recent research has illuminated a key factor contributing to its practical efficacy: the implicit regularization it instigates. Several studies have investigated  the \emph{linear stability} property of SGD in the vicinity of a stationary point as a predictive proxy for sharpness and generalization error in overparameterized neural networks \citep{wu2022alignment, jastrzebski2019break, cohen2021gradient}. In this paper, we delve deeper into the relationship between linear stability and sharpness. More specifically, we meticulously delineate the necessary and sufficient conditions for linear stability, contingent on hyperparameters of SGD and the sharpness at the optimum. Towards this end, we introduce a novel \emph{coherence measure} of the loss Hessian that encapsulates pertinent geometric properties of the loss function that are relevant to the linear stability of SGD. It enables us to provide a simplified sufficient condition for identifying linear instability at an optimum. Notably, compared to previous works, our analysis relies on significantly milder assumptions and is applicable for a broader class of loss functions than known before, encompassing not only mean-squared error but also cross-entropy loss.
\end{abstract}

\section{Introduction}

Stochastic Gradient Descent (SGD) is a fundamental optimization algorithm widely used in practice. In addition to its computational efficiency, there is irrefutable evidence of its superior generalization performance even on non-convex functions, including neural networks~\citep{bottou-91c}. For large over-parameterized neural networks, the number of points to fit is often much less than the number of free parameters in the model. In this case, there is often a high-dimensional manifold of model weights that can perfectly fit the data~\citep{cooper2021manifold}; hence, focusing solely on the ability of an optimizer to minimize the loss function ignores a central part of training such networks. The primary goal in a model is not to achieve high performance on a training data set but rather to achieve strong generalization performance on previously unseen data. Although we currently lack a comprehensive theoretical explanation for the empirical success of SGD in these models, a promising hypothesis suggests that SGD naturally applies a form of \emph{implicit} regularization \citep{zhbh17, nets15} when multiple optima are present~\citep{keskar16, liu2020bad}. This phenomenon guides the iterative process towards more favorable optima purely through algorithmic choices. 

In order to measure this distinguishing favorability between more and less desirable optima, prior work has proposed the concept of \emph{sharpness} at a minimum as an indicator of the generalization performance of the trained model. Lower sharpness is often indicative of better generalization performance~\citep{Sepp97}. There is a wealth of empirical work exploring the relationship between sharpness and generalization performance, particularly in networks trained with SGD, e.g., \citep{jinm19,jastrzebski2019break, andriushchenko2023modern,Wu2017TowardsUG,chaudhari2017entropysgd,Izmailov18}. Furthermore, these ideas have led to new optimizers which deliberately reduce sharpness and are observed to attain improved empirical performance \citep{behdin2023msam, foret2020sharpness}. Although the connection between sharpness and generalization performance isn't precise or completely understood, the partial achievements of this theory has inspired several works, including ours, to investigate how SGD implicitly tends to converge to flatter optima.

 Sharpness has been defined in several ways in prior literature, but most commonly, the sharpness of a trained neural network at a minimum is the maximum eigenvalue of the Hessian of the loss with respect to weights. Intuitively, one can see that if $\wb^*$ is a stationary point of a smooth function $f(\wb)$ with Hessian $\Hb(\wb)$, then for perturbation $\vb$ such that $\|\vb\|_2 = \epsilon$, $f(\wb^* + \vb) < f(\wb^*) + O(\epsilon^2) \cdot \lambda_1(\Hb(\wb^*))$, where $\lambda_1(\cdot)$ denotes the maximum eigenvalue. This relation follows from the Taylor expansion of $f(\cdot)$ around $\wb^*$, and we see that the sharpness at $\wb^*$, i.e., $\lambda_1(\Hb(\wb^*))$, determines how rapidly small perturbations to the weights $\wb$ can increase the value of $f(\wb)$. In other words, model sharpness measures how robust the loss of the trained model is to small perturbations of the model parameters.

In this paper, our focus is on providing a precise characterization of how the SGD hyperparameters and properties of the loss function affect its implicit regularization of model sharpness. Towards this goal, we consider the \emph{linearized dynamics} of SGD (defined in Section~\ref{section:problem_formulation}) close to the optimum.  When $\wb$ is close to $\wb^\star$, it allows us to make a useful simplification by focusing on the quadratic approximation of the loss function. In particular, we consider mean-squared stability, that is, $\wb^*$ is considered unstable if iterates of SGD diverge from $\wb^*$ under the $\ell_2$-norm in expectation. Unlike differential equation-based approaches, which liken the SGD dynamics to a continuous flow for eliciting implicit regularization properties~\citep{li2017,xie2021}, the linear stability analysis does not break down in the regime of large step sizes. Moreover, the linearized dynamics in gradient descent (GD) has been empirically validated to predict the sharpness of overparameterized neural networks due to the \emph{Edge-of-Stability} phenomenon \citep{wu2018sgd, cohen2021gradient}. This behavior has also been observed in other optimizers~\citep{cogk22, jastrzebski2019break, balb22, weml22, ujtk22}, lending further weight to this theoretical framework~\citep{agarwala2023sam}. While prior work has already considered the linear stability of SGD \citep{wu2022alignment, wu2018sgd, ma2021linear, ziyin2023probabilistic, agarwala2023sam}, our analysis provides substantial advancement over these prior results as we detail below.

\textbf{Contributions:}

\begin{noindlist}
    \item We offer an interpretable yet rigorously established sufficient condition to determine the instability of a point $\wb^*$ under the linearized dynamics of SGD (Theorem \ref{thm:divergence_simple}). Importantly, unlike previous works, our bound applies to any additively decomposable loss function. 
   \item Our sufficient condition hinges on a coherence measure $\sigma$, which we introduce to capture the relevant geometric characteristics of the loss surface around a minimum. This measure is intuitively connected to the Gram matrix of point-wise loss Hessians. We provide additional context and rationale for introducing this measure in Section \ref{section:coherence_measure}.

    \item We demonstrate that our bound is nearly optimal across a natural range of SGD hyperparameters (Theorem \ref{thm:simplified_optimality}). This implies that our analysis, which offers a sufficient condition for the stability of linearized SGD dynamics, is precise and closely aligned with the behavior of SGD across various choices for the coherence measure $\sigma$, batch size, and learning rate.
    
    \item In the course of deriving Theorem \ref{thm:divergence_simple}, we present an independently useful technical lemma (Lemma \ref{lemma:sgd_stability_condition}). This lemma provides sufficient conditions for (i) the divergence of linearized SGD dynamics and (ii) the convergence of linearized SGD dynamics toward the specified minima. An intriguing aspect of this lemma is that it suggests a multiplicative decoupling effect between the impact of batch size and learning rate on instability and the instability introduced by the geometry of the Hessian.
    \item Finally, we corroborate the validity of our theoretical findings through a series of experiments conducted on additively decomposable quadratic loss functions. Our experimental results align with our theory and underscore the significance of the Hessian coherence measure $\sigma$ in determining the conditions under which SGD dynamics diverge.
\end{noindlist}

\textbf{Related Work:} 
While extensive research investigates the intricate relationship between optimization methods, generalization error, and sharpness, the prior work most relevant to ours focuses on a linear stability analysis of SGD. In this section, we briefly compare our results to related research. However, we defer a detailed comparison of our results until Section \ref{section:compare_work_detail}, which follows the formal introduction of the problem setup and the presentation of our primary theorem.

An important line of work in this area is that of \citet{wu2018sgd, wu2022alignment, wu2023implicit}, which progressively distill more theoretical insight into how the linear dynamics of SGD affect final sharpness of a neural network. Our work goes beyond this in multiple ways. For sake of comparison, the result in this line of work most related to our contribution is Theorem 3.3 in \citet{wu2022alignment}, and we restate this result in Appendix \ref{section:wu_summary}. Our results are significantly more general than this theorem, in that our results apply to any general additively decomposable loss function, which answers the question raised in \citet{agarwala2023sam} on the behavior of SGD for cross-entropy loss.\footnote{Note that linear stability is not meaningful for pure cross-entropy loss on perfectly fit data, since $\|\wb^*\|_2$ is not finite. However, our theory holds when using label smoothing \citep{szegedy2016labelsmoothing}, as commonly done in practice.} Additionally, we guarantee a stronger form of divergence. Even with relaxed conditions and stronger implications, the condition of our theorem is practically easier to satisfy than Theorem 3.3 in \citet{wu2022alignment}.

Other research delves into related questions, although our study may not align directly with them. For example, \citet{jastrzebski2019break} combine previous analysis by \citet{wu2018sgd} with additional assumptions about how the optimizer behaves. The paper demonstrates the impact of learning rate and batch size on sharpness throughout the training trajectories of SGD. \citet{agarwala2023sam} examine how batch size affects sharpness within SGD training trajectories, particularly in the context of second-order regression models. \citet{ma2021linear} provide a meticulous characterization of linear stability. However, this characterization might not be immediately interpretable and is primarily used to draw connections between behaviours of SGD and Sobolev regularization. \citet{ziyin2023probabilistic} focuses on the convergence and divergence of linearized dynamics in probability rather than in expected distance from the optimum, as considered by the other work we have mentioned.

\section{Problem Formulation}\label{section:problem_formulation}

We consider the case where SGD is used to minimize an additively decomposable loss function $L(\wb) = \frac{1}{n} \sum_{i=1}^n \ell_i(\wb)$, where each $\ell_i(\wb)$ is twice-differentiable and $\wb \in \R^d$. Given learning rate $\eta > 0$ and batch size $B \in [n]$, the dynamics of SGD are defined by the recurrence $\wb_{t+1} = \wb_t - \frac{\eta}{B} \sum_{i\in \Scal} \nabla \ell_i(\wb_t)$, where $\Scal$ is uniformly sampled from all $B$ sized subsets of $[n]$. To facilitate our probabilistic analysis, we apply two standard simplifications. First, we consider Bernoulli sampling rather than sampling with replacement so that $i \in \Scal$ with probability $B/n$ and the event $i \in \Scal$ is independent of the event $j \in \Scal$ for all $i \neq j$. Second, we consider the quadratic approximation to the loss around a fixed point $\wb^*$, so that $\ell_i(\wb) \approx \ell_i(\wb^*) + (\wb - \wb^*)^T \nabla \ell_i(\wb^*) + \frac{1}{2}(\wb - \wb^*)^T \nabla^2 \ell_i(\wb^*)(\wb - \wb^*)$ \citep{wu2022alignment, ma2021linear}. Since the dynamics of SGD are shift-invariant, we can assume $\wb^* = \zero$ without loss of generality. We restrict our attention to the case where $\wb^*$ is a local minimum of $\ell_i(\cdot)$ for all $i \in [n]$. This assumption is particularly relevant in the context of overparameterized neural networks, where it is common for data to fit the model almost perfectly \citep{allen2019convergence}.

In the described linearized setting, $\nabla \ell_i(\wb) = \nabla_\wb (\ell_i(\wb^*) + \frac{1}{2}\wb^T \nabla^2 \ell_i(\wb^*)\wb) = \nabla^2 \ell_i(\wb^*)\wb$. Define $\Hb_i = \nabla^2 \ell_i(\wb^*)$, which is the Hessian of $\ell_i(\cdot)$ at $\wb^*$. Note that $\Hb_i \in \R^{d \times d}$ is a Positive-Semidefinitie matrix (PSD) since $\wb^*$ is a local minimum of $\ell_i(\cdot)$ (we refer the reader to Appendix \ref{section:notation} for notation and necessary background). The linearized dynamics of SGD in our setting of interest follows below.
\begin{definition}\label{def:sgd_dynamics}
    \textbf{Linearized SGD Dynamics:} Let $\{\Hb_i\}_{i \in [n]}$ be a set of $d \times d$ PSD matrices, and let $\Hb = \frac{1}{n} \sum_{i=1}^n \Hb_i$. Let $\eta > 0$ denote the learning rate and $B \in [n]$ be the batch size. The linearized SGD dynamics are defined by the recurrence relation:
    \begin{equation}\label{eqn:sgd_dynamics}
        \wb_{t+1} = \left(\Ib - \frac{\eta}{B} \sum_{i \in \Scal} \Hb_i\right)\wb_t,
    \end{equation}
    where $i \in \Scal$ with probability $\frac{B}{n}$ and the event $i \in \Scal$ is independent from the event $j \in \Scal$ for all $i \neq j$.
    We will refer to $\Jb = \Ib - \eta \Hb$ and $\Jbhat \sim \Ib - \frac{\eta}{B} \sum_{i \in \Scal} \Hb_i$ as the Jacobians of GD and SGD respectively. Note that using $B=n$ recovers the gradient descent dynamics.
\end{definition}

\section{The Role of Hessian Geometry in SGD Instability}

This section introduces and motivates the Hessian coherence measure $\sigma(\{\Hb_i\}_{i\in[n]})$. Subsequently, we utilize this measure to present our primary result, Theorem \ref{thm:divergence_simple}. This theorem furnishes a sufficient condition for $\EE\|\wb_k\|_2$ to diverge as $k \rightarrow \infty$. Following this, we demonstrate that our established sufficient condition in Theorem \ref{thm:divergence_simple} is nearly optimal across a broad range of hyperparameters. We formally state this optimality result in Theorem \ref{thm:simplified_optimality}.

\subsection{Hessian Coherence Measure}\label{section:coherence_measure}

Note that, to understand the behavior of linearized SGD dynamics around a point $\wb^*$, it suffices to consider how they operate under various configurations of $\eta$, $B$, and $\{\Hb_i\}_{i \in [n]}$. To illustrate the effect of $\{\Hb_i\}_{i \in [n]}$, let us explore two extreme scenarios.

\textbf{First Setting:} Suppose we have $\Hb_i = \eb_1\eb_1^T ~\forall i$, where $\eb_i$ represents the $i$-th canonical basis vector. In this scenario, with all $\Hb_i$ being identical, we anticipate that the stochastic dynamics will closely resemble the deterministic dynamics. This expectation arises from the fact that $\frac{1}{B} \sum_{i\in \Scal} \Hb_i$ should exhibit strong concentration around $\Hb$. Furthermore, when the elements of $\Scal$ are sampled from $[n]$ without replacement, the SGD dynamics coincide with the GD dynamics. Therefore, we expect no difference in the characterization of stability of the respective linearized dynamics.

\textbf{Second Setting:} Now, let us consider the opposite extreme, where all $\Hb_i$ matrices are orthogonal, meaning that their inner products satisfy $\tr[\Hb_i\Hb_j] = 0 ~\forall i \neq j$. In this scenario, we anticipate that randomness exerts a substantial influence on the steps taken by SGD. In the context of a full linear GD step, the component projected onto the subspace defined by $\Hb_i$ is $\eta\Hb_i\wb/n$. However, in the stochastic setting, if $\Hb_i$ is not selected in the sampling process, no step is taken in this particular subspace. Conversely, if it is selected, the step taken is $\eta\Hb_i\wb/B$, which significantly overshoots the deterministic step by a factor of $n/B$.

These extreme cases serve as illustrative examples, highlighting the importance of the relative geometric arrangement within the set $\{\Hb_i\}_{i\in [n]}$ in determining the stability of the dynamics (alongside the learning rate and batch size). While we establish both sufficient and necessary conditions to address this geometric aspect in Lemma \ref{lemma:sgd_stability_condition}, our aim is also to offer an intuitive characterization that captures the significance of $\{\Hb_i\}_{i\in [n]}$ without resorting to complex analytical expressions. To that end, we introduce the following measure, which succinctly captures the geometric structure within $\{\Hb_i\}_{i\in [n]}$.

 \begin{definition}
    \textbf{Coherence Measure:} For a given set of PSD matrices $\{\Hb_i\}_{i\in[n]}$, define $\Sb \in \R^{n \times n}$ such that $\Sb_{ij} = \|\Hb_i^{1/2}\Hb_j^{1/2}\|_F$. Equivalently, we may define $\Sb$ as the entry-wise square root of the Gram matrix of $\{\Hb_i\}_{i\in[n]}$ under the trace inner product. The coherence measure $\sigma$ is then defined as:
    \begin{equation*}
        \sigma = \frac{\lambda_1(\Sb)}{\max_{i\in[n]} \lambda_1(\Hb_i)}.
    \end{equation*}
\end{definition}
To provide some insight into this measure, we can consider $\{\Hb_i\}_{i\in[n]}$ as a collection of $n$ vectors in $\R^{d \times d}$, endowed with the trace inner-product. The Gram matrix of a set of vectors compiles the pairwise inner-products among the vectors, thus representing the relative alignments and magnitudes of these vectors within the space, and the matrix $\Sb$ is an entry-wise renormalization of the Gram matrix. In the case where $\operatorname{rank}(\Hb_i) = 1$ for all $i \in [n]$, $\lambda_1(\Hb_i)$ is the $i$-th diagonal entry of $\Sb$, and $\sigma$ measure how close $\Sb$ is to being diagonally dominant. Due to the construction of $\Sb$, $\sigma$ then measures the cross-interactions within $\{\Hb_i\}_{i\in [n]}$ relative to the magnitude of the individual Hessians in $\{\Hb_i\}_{i\in[n]}$ under the Frobenius norm. The case where all $\Hb_i$ are rank one is particularly important since it occurs when $\nabla \ell_(\wb^*) = \zero$ under loss functions such as cross-entropy loss.

Let us examine the two extreme cases mentioned earlier. In the first case, where $\Hb_i = \eb_1\eb_1^T~\forall i$, it follows that $\|\Hb_i^{1/2}\Hb_j^{1/2}\|_F = 1~\forall i,j \in [n]$. Consequently, $\Sb$ becomes an $n \times n$ matrix consisting of all ones, yielding $\lambda_1(\Sb) = \sqrt{n}$. Meanwhile, in the scenario, where $\Hb_i$ matrices are mutually orthogonal, $\|\Hb_i^{1/2}\Hb_j^{1/2}\|_F = 0 ~\forall i \neq j$. Therefore, 
$\Sb$ becomes the identity matrix $\Ib$, and $\lambda_1(\Sb) = 1$. In both cases, $\lambda_1(\Hb_i) = 1$ for all $i \in [n]$. Consequently, in the first case, $\sigma = \sqrt{n}$, and in the second case, $\sigma = 1$. This demonstrates that our coherence measure, $\sigma$, effectively distinguishes between these two extreme scenarios and increases as the alignment among the $\Hb_i$ matrices grows stronger. Below, we show how this measure allows us to establish a natural sufficient condition for the divergence of these linear dynamics.

\subsection{Simplified Divergence Condition}

We present our sufficient condition for the linear dynamics to diverge, which relies solely on the values of $\lambda_1(\Hb)$, $\eta$, $B$, $n$, and our coherence measure $\sigma$. Note that this bound aligns with our intuitive expectations based on the extreme cases we examine.
When all $\Hb_i = \eb_1\eb_1^T$ and $\sigma = \sqrt{n}$, the second condition in the theorem cannot be met. In such cases, we must resort to the GD condition for instability, namely, $\lambda_1(\Hb) > \frac{2}{\eta}$. Conversely, when $\Hb_i = \eb_i\eb_i^T~\forall i$ and $\sigma = 1$, the theorem asserts that the linear dynamics will diverge even for small values of $\lambda_1(\Hb)$, especially when $B$ is small relative to $n$. This behavior aligns with our expectations based on the different scenarios we consider.

\begin{theorem}\label{thm:divergence_simple}
    Let $\{\Jbhat_i\}_{i\in\mathbb{N}}$ be a sequence of i.i.d. copies of $\Jbhat$ defined in Definition \ref{def:sgd_dynamics}. Let $\{\Hb_i\}_{i \in [n]}$ have coherence measure $\sigma$. If,
    \begin{gather*}
        \lambda_1(\Hb) > \frac{2}{\eta}
        ~\text{ or }~
        \lambda_1(\Hb) > \frac{\sigma}{\eta} \cdot \left(\frac{n}{B} - 1\right)^{-1/2},
        \text{ then, }
        \lim_{k\rightarrow \infty} \EE \|\Jbhat_k...\Jbhat_2\Jbhat_1\|_2 = \infty.
    \end{gather*}
\end{theorem}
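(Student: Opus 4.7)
I split along the two conditions in the disjunction and handle each separately.

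\textbf{Case 1} ($\lambda_1(\Hb) > 2/\eta$). The operator norm is convex, so Jensen's inequality yields
$$\EE\|\Jbhat_k\cdots\Jbhat_1\|_2 \;\geq\; \|\EE[\Jbhat_k\cdots\Jbhat_1]\|_2.$$
By independence of the $\Jbhat_i$ and $\EE[\Jbhat] = \Jb = \Ib - \eta\Hb$, the right-hand side equals $\|\Jb^k\|_2 = |1-\eta\lambda_1(\Hb)|^k$, since $\Jb$ is symmetric. Under the Case-1 hypothesis the base exceeds $1$, so the right side diverges.

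\textbf{Case 2} ($\lambda_1(\Hb) > \sigma/(\eta\sqrt{n/B-1})$). I plan to invoke the preceding Lemma~\ref{lemma:sgd_stability_condition}, which supplies a sufficient condition for divergence in terms of the second-moment operator
$$\Tcal(\Xb) \;=\; \EE[\Jbhat\,\Xb\,\Jbhat] \;=\; \Jb\Xb\Jb \;+\; \frac{\eta^2(n-B)}{Bn^2}\sum_{i=1}^n \Hb_i\Xb\Hb_i;$$
the explicit formula is obtained by writing $\Jbhat-\Jb = -\frac{\eta}{B}\sum_i Z_i\Hb_i$ with $Z_i = \ind{i\in\Scal}-B/n$ zero-mean and independent, so the cross terms vanish in expectation. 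Call the noise summand $\Ncal(\Xb)$.

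The goal is to verify that Lemma~\ref{lemma:sgd_stability_condition}'s hypothesis---a spectral-radius bound on $\Tcal$---is satisfied under the Case-2 condition. Because $\Jbhat$ is symmetric, $\Tcal$ is self-adjoint in the Frobenius inner product, so by Rayleigh it suffices to exhibit a PSD matrix $\Xb$ with $\langle \Xb,\Tcal(\Xb)\rangle > \|\Xb\|_F^2$. The coherence measure $\sigma$ enters exactly at this step: the interaction matrix $\Sb$ with $\Sb_{ij}=\|\Hb_i^{1/2}\Hb_j^{1/2}\|_F$ is the entry-wise square root of the trace Gram matrix of $\{\Hb_i\}$, and its Perron eigenvector $\zb\in\R^n$ encodes the alignment that maximizes the gain of $\Ncal$. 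I would construct $\Xb$ as a weighted combination of $\Hb_i^{1/2}$-transforms weighted by $\zb$, so that the noise pairing $\langle \Xb,\Ncal(\Xb)\rangle$ scales as $\frac{\eta^2(n-B)}{Bn^2}\cdot\lambda_1(\Sb)^2/\max_i\lambda_1(\Hb_i)^2 \cdot \|\Xb\|_F^2$. Balanced against the loss from $\Jb\Xb\Jb$ (which, when $\eta\lambda_1(\Hb)<2$---else we are in Case~1---is of order $\eta\lambda_1(\Hb)\|\Xb\|_F^2$), the Case-2 inequality $\eta^2\lambda_1(\Hb)^2(n-B)/B > \sigma^2$ is precisely what forces the net gain to be positive.

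\textbf{Main obstacles.} The crucial technical step is the construction of the test matrix $\Xb$ so that $\langle\Xb,\Ncal(\Xb)\rangle$ picks up a factor of $\lambda_1(\Sb)^2$ rather than, say, $\lambda_1(\Sb\odot\Sb)$ (the Perron value of the Hadamard square, which more naturally arises when one expands $\Ncal$ against simple rank-one test matrices). The entry-wise square root in $\Sb$ is an atypical spectral object, and matching it requires a non-obvious pairing of the form built from $\zb$. A secondary concern is bridging from divergence of the spectral radius of $\Tcal$ (a second-moment statement) to the first-moment claim $\EE\|\Jbhat_k\cdots\Jbhat_1\|_2\to\infty$; this is either already built into the statement of Lemma~\ref{lemma:sgd_stability_condition}, or else closed by a Paley--Zygmund argument using a higher-moment bound on $\|\Jbhat_k\cdots\Jbhat_1\vb\|$ obtained by iterating a fourth-order analogue of $\Tcal$.
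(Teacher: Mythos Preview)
Your Case~1 argument via Jensen is correct and in fact slightly cleaner than the paper's, which routes this case through Lemma~\ref{lemma:sgd_stability_condition} and the term $\tr[\Jb^{2k}]$.

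Your Case~2 plan, however, has a genuine gap, and it stems from a misreading of what Lemma~\ref{lemma:sgd_stability_condition} actually says. The lemma's hypothesis is \emph{not} a spectral-radius condition on the one-step operator $\Tcal$. Its sufficient condition is that the pure-noise $k$-step quantity
\[
\left(\frac{\eta^2}{nB}-\frac{\eta^2}{n^2}\right)^k\sum_{y_1,\dots,y_k=1}^n\|\Hb_{y_k}\cdots\Hb_{y_1}\|_F^2
\]
diverges. This arises from the Loewner lower bound $\EE[\Jbhat_k\cdots\Jbhat_1^2\cdots\Jbhat_k]\succeq \Jb^{2k}+c^k\sum_{y_1,\dots,y_k}\Hb_{y_k}\cdots\Hb_{y_1}^2\cdots\Hb_{y_k}$, obtained by dropping all mixed cross terms in the inductive expansion. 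So you cannot simply ``verify the lemma's hypothesis'' by exhibiting a test matrix $\Xb$ with $\langle\Xb,\Tcal(\Xb)\rangle>\|\Xb\|_F^2$.

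More importantly, the entry-wise square-root matrix $\Sb$ does not enter through a one-step Rayleigh quotient at all, and the obstacle you flag (getting $\lambda_1(\Sb)$ rather than $\lambda_1(\Sb\odot\Sb)$) is exactly why that route does not reproduce the stated threshold. The paper's mechanism is a $k$-step one: it first restricts the tuple sum above to the diagonal $y_1=\cdots=y_k=y$, giving $\sum_y\|\Hb_y^k\|_F^2\geq\frac{1}{nd}\bigl(\sum_y\tr[\Hb_y^k]\bigr)^2\geq \frac{1}{nd}\max_i\lambda_1(\Hb_i)^{2k}$. Then the coherence measure appears through an \emph{upper} bound on $\tr[\Hb^k]$: writing $\tr[\Hb_{y_k}\cdots\Hb_{y_1}]=\tr\bigl[(\Hb_{y_1}^{1/2}\Hb_{y_k}^{1/2})(\Hb_{y_k}^{1/2}\Hb_{y_{k-1}}^{1/2})\cdots(\Hb_{y_2}^{1/2}\Hb_{y_1}^{1/2})\bigr]$ and bounding the trace by a product of Frobenius norms yields $\tr[\Hb^k]\leq \frac{d}{n^k}\tr[\Sb^k]\leq \frac{d^2}{n^k}\lambda_1(\Sb)^k$. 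Combining, $\frac{n^k}{d^2\sigma^k}\lambda_1(\Hb)^k\leq\frac{n^k}{d^2\sigma^k}\tr[\Hb^k]\leq\max_i\lambda_1(\Hb_i)^k$, which is what converts the diagonal lower bound into a bound in $\lambda_1(\Hb)/\sigma$. The half-power factorization inside a length-$k$ product is precisely what makes $\Sb$ (and not $\Sb\odot\Sb$) the governing matrix; a single application of $\Ncal$ gives $\tr[\Hb_i\Xb\Hb_i\Xb]$-type terms, which are quadratic in $\Hb_i$ and cannot see $\Sb$. Your Perron-vector test-matrix idea, as you suspect, does not bridge this.

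Finally, your accounting of the deterministic part in Case~2 is off: $\langle\Xb,\Jb\Xb\Jb\rangle$ is not ``of order $\eta\lambda_1(\Hb)\|\Xb\|_F^2$''; when $\|\Jb\|_2<1$ it is at most $\|\Xb\|_F^2$, and the relevant deficit $\|\Xb\|_F^2-\langle\Xb,\Jb\Xb\Jb\rangle$ depends on where $\Xb$ sits in the spectrum of $\Hb$, which you have not controlled. The paper sidesteps this entirely because the two terms in $\Mb_k$ are handled separately.
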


We defer all proofs to Appendix \ref{section:proofs}. Note that the quantity $\EE\|\Jbhat_k...\Jbhat_1\|_2 = \EE \max_{\wb_0:\|\wb_0\|_2=1} \|\wb_k\|_2$, where $\wb_k$ is the random vector determined by the SGD dynamics in Definition \ref{def:sgd_dynamics}, when the dynamics start from $\wb_0$. In other words, if $\EE\|\Jbhat_k...\Jbhat_1\|_2$, diverges as $k \rightarrow \infty$, the linearized SGD dynamics diverge from almost every starting point $\wb_0 \in \R^d$. We highlight two observations of the condition in Theorem \ref{thm:divergence_simple}. First, this analysis supports ``squared-scaling'' between batch size and learning rate, that is, if $B$ is increased proportional to $\eta^2$, the stability will not increase. Second, the Hessian alignment (captured by $\sigma$) can cause instability even when $\eta$ is small and $B$ is on the order of $n$.

\subsubsection{Comparison to prior work}\label{section:compare_work_detail}

Based on the formal introduction of our result, we now provide a more detailed comparison to the result of \citet{wu2022alignment}. The condition outlined in Theorem 3.3 of \citet{wu2022alignment} represents one of the most recent findings in this research field, which we restate in Appendix \ref{section:wu_summary}. The contrapositive of this theorem provides a sufficient condition, namely $\|\Hb\|_F > \frac{1}{\eta}\sqrt{\frac{B}{\mu_0}}$, to guarantee instability. Here, $\mu_0$ serves as an alignment metric, empirically argued to have a practical lower bound.

Theorem \ref{thm:divergence_simple} has multiple advantages over these prior results. First, the analysis in \citet{wu2022alignment} is confined solely to the MSE loss function. Second, their definition of stability entails that $\EE[L(\wb_k)] \leq C \cdot \EE[L(\wb_1)]~\forall k \in \mathbb{N}$, where $C > 1$ is a constant. This definition is notably weaker than our notion of stability. Finally, despite Theorem \ref{thm:divergence_simple} holding for more general loss functions and guaranteeing a stronger notion of stability/instability, it is also easier to satisfy under the typical setting where $\Hb$ has low stable rank, i.e., when $\|\Hb\|_F^2/\|\Hb\|_2^2$ is small. Let us ignore the effects of the measures $\mu_0$ and $\sigma$ by considering both equal to one, then Theorem 3.3 \citep{wu2022alignment} guarantees instability if $\|\Hb\|_F > \frac{1}{\eta} \sqrt{B}$ while Theorem \ref{thm:divergence_simple} guarantees instability if $\lambda_1(\Hb) > \frac{1}{\eta} \sqrt{\frac{B}{n}}$, which is a more general condition when the stable rank of $\Hb$ is less than $n$, as is typical in practical settings \citep{xie2022power}.

\subsection{Optimality of Theorem \ref{thm:divergence_simple}}

Theorem \ref{thm:divergence_simple} provides a sufficient condition for the linear dynamics to diverge, where the condition is of the form $\lambda_1(\Hb) > f(\eta, \sigma, n, B)$, where $f(\eta, \sigma, n, B) = \frac{\sigma}{\eta}\left(\frac{n}{B} - 1\right)^{-1/2}$. The next theorem shows that our condition is optimal in the sense that, for a natural range of parameters (when $\sigma,\frac{n}{B} = \Ocal(1)$), the function $f(\eta, \sigma, n, B)$ is within a constant factor of its lowest possible value. This shows that our sufficient condition cannot be significantly relaxed without relying on other information about the set $\{\Hb_i\}$ as we do in Lemma \ref{lemma:sgd_stability_condition}.

Overall, the following theorem demonstrates that our sufficient condition for divergent dynamics approaches optimality among all sufficient conditions that rely solely on $\eta$, $\sigma$, $B$, $\lambda_1(\Hb)$, and $n$. However, it does not rule out further improvement in the important regime where $B \ll n$.
\begin{theorem}\label{thm:simplified_optimality}
    For every choice of $\lambda_1 > 0$, $n \in \mathbb{N}$, $B \in [n]$, $\eta >0$, and $\sigma \in [n]$, that satisfies:
    \begin{gather*}
        \lambda_1 < \frac{2\sigma}{\eta} \cdot \left(\sigma + \frac{n}{B} - 1\right)^{-1},
    \end{gather*}
    There exists a set of PSD matrices $\{\Hb_i\}_{i \in [n]}$ such that $\lambda_1(\Hb) = \lambda_1$ and $\lim_{k \rightarrow \infty} \EE\|\Jbhat_k...\Jbhat_1\|_F^2 < n$. 
\end{theorem}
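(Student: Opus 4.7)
The plan is to establish tightness by exhibiting, for each admissible tuple $(\lambda_1,n,B,\eta,\sigma)$, an explicit family $\{\Hb_i\}_{i\in[n]}$ with $\lambda_1(\Hb)=\lambda_1$, coherence measure exactly $\sigma$, and $\lim_{k\to\infty}\tr(\Tscr^k(\Ib))=0<n$, where $\Tscr(\Mb):=\EE[\Jbhat^\ts \Mb \Jbhat]$. A standard manipulation using the independence and symmetry of the $\Jbhat_t$'s yields $\EE\|\Jbhat_k\cdots\Jbhat_1\|_F^2 = \tr(\Tscr^k(\Ib))$, so it suffices to control the iterated action of $\Tscr$ on the identity. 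The form of the hypothesis $\lambda_1<\tfrac{2\sigma}{\eta(\sigma+n/B-1)}$ suggests that the example should partition $[n]$ into a ``shared'' block of size $\sigma$ whose rank-one updates all act on a single common eigendirection and a ``private'' block of size $n-\sigma$ whose rank-one updates act on pairwise orthogonal directions.

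Concretely, working in ambient dimension $d=n-\sigma+1$ and treating $\sigma$ as an integer for the main argument, I would set $\Hb_i = b\,\eb_1\eb_1^\ts$ for $i\in[\sigma]$ and $\Hb_i = b\,\eb_{i-\sigma+1}\eb_{i-\sigma+1}^\ts$ for $i>\sigma$, with $b:=n\lambda_1/\sigma$. Direct evaluation of $\Sb_{ij}=\|\Hb_i^{1/2}\Hb_j^{1/2}\|_F$ then shows that $\Sb$ is block diagonal: the $\sigma\times\sigma$ top-left block equals $b\,\mathbf{1}\mathbf{1}^\ts$, and the bottom-right block equals $b\,\Ib_{n-\sigma}$. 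Hence $\lambda_1(\Sb)=\sigma b$; since $\max_i\lambda_1(\Hb_i)=b$, the coherence measure equals $\sigma$ exactly. Moreover $\Hb = (b\sigma/n)\eb_1\eb_1^\ts + (b/n)\sum_{j=2}^{d}\eb_j\eb_j^\ts$, giving $\lambda_1(\Hb)=b\sigma/n=\lambda_1$. Because every $\Hb_i$ is diagonal in the standard basis, so is $\Jbhat$ almost surely, whence $\Tscr$ acts entrywise via Hadamard multiplication and $\Tscr^k(\Ib)$ remains diagonal with entries equal to the $k$-th powers of the per-coordinate second-moment factors: $\rho_1=\EE[(1-\eta bM/B)^2]$ with $M:=|\Scal\cap[\sigma]|\sim\mathrm{Binomial}(\sigma,B/n)$ on $\eb_1$, and $\rho_2 = \EE[(1-\eta b\zeta/B)^2]$ with $\zeta\sim\mathrm{Bernoulli}(B/n)$ on each private coordinate.

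Substituting $\EE M=\sigma B/n$, $\EE M^2 = \sigma(B/n)(1-B/n)+(\sigma B/n)^2$ and $b=n\lambda_1/\sigma$ into $\rho_1$, a short computation gives
\[
\rho_1 = 1 - 2\eta\lambda_1 + \eta^2\lambda_1^2\cdot\frac{\sigma + n/B - 1}{\sigma},
\]
so the inequality $\rho_1<1$ is \emph{exactly} the hypothesis of the theorem. The analogous expansion yields that $\rho_2<1$ is equivalent to $\lambda_1<2B\sigma/(n\eta)$, which is implied by the hypothesis whenever $\sigma\ge 1$ (because then $\sigma+n/B-1\ge n/B$). With both contraction factors strictly below $1$, $\tr(\Tscr^k(\Ib)) = \rho_1^k + (n-\sigma)\rho_2^k \to 0$, which is strictly less than $n$, as required.

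The main obstacle is extending the construction to non-integer $\sigma\in[1,n]$, since the clean block structure above attains only $\sigma\in\{1,\ldots,n\}$. I would address this by replacing a single aligned Hessian with a rank-one projector $b\,\vvec\vvec^\ts$, where $\vvec=\cos\phi\,\eb_1+\sin\phi\,\eb_*$ lies in the plane spanned by $\eb_1$ and a fresh private direction $\eb_*$, and tuning $\phi\in[0,\pi/2]$ as a continuous interpolation parameter. Both $\lambda_1(\Sb)$ and $\lambda_1(\Hb)$ vary continuously with $\phi$, and rerunning the operator analysis---now with a controlled off-diagonal perturbation of $\Tscr(\Ib)$ of order $\sin 2\phi$---shows that the second-moment factors along the (slightly rotated) invariant subspaces remain strictly below $1$ across the interpolation, preserving the conclusion.
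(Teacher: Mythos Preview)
Your argument is correct and follows essentially the same route as the paper: an explicit rank-one diagonal construction with $\sigma$ aligned copies of $b\,\eb_1\eb_1^\ts$, the observation that all $\Jbhat_t$ are simultaneously diagonal so the second-moment evolution decouples coordinatewise, and the verification that the contraction factor on $\eb_1$ satisfies $\rho_1<1$ \emph{precisely} under the stated hypothesis. Two small remarks are worth making.

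First, the paper's construction is slightly simpler than yours: it sets $\Hb_i=\zero$ for $i>\sigma$ rather than populating private directions. This avoids your second check $\rho_2<1$ entirely; the remaining coordinates are fixed points of the dynamics, so the limit equals $d-1<n$ rather than $0$. Your version, which coincides with the construction used in the paper's experiments section, buys a stronger conclusion ($\lim=0$) at the cost of the extra but easy verification that $\rho_2<1$ is implied by the hypothesis whenever $\sigma\geq 1$.

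Second, your ``main obstacle'' about non-integer $\sigma$ is moot: in the theorem statement $\sigma\in[n]=\{1,\dots,n\}$, so $\sigma$ is an integer and the block construction already covers every admissible tuple. The interpolation paragraph can be dropped.
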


\section{Sharp Stability Conditions of Linearized SGD}

In the previous section, we provide a measure of the geometric coherence in $\{\Hb_i\}_{i \in [n]}$ along with a theorem that provides sufficient conditions for the dynamics of SGD (Definition \ref{def:sgd_dynamics}) to diverge. The proof of this theorem relies on part (i) of the following technical lemma. Aside from its utility in establishing the aforementioned theorem, the statement of the following lemma also imparts valuable insights into the behavior of linearized SGD dynamics.

The proof of the following lemma relies on the observation that  $\|\Mb\|_2^2 \leq \|\Mb\|_F^2 \leq d\|\Mb\|_2^2$ for all $\Mb \in \R^{d \times d}$, where $\|\cdot\|_F$ denotes the Frobenius norm. Hence, we may focus on divergence in the Frobenius norm of the $k$-step linearized dynamics. Now, $\EE\|\Jbhat_k...\Jbhat_1\|_F^2 = \EE[\tr[\Jbhat_k...\Jbhat_1^2...\Jbhat_k]] = \tr[\EE[\Jbhat_k...\Jbhat_1^2...\Jbhat_k]]$ by linearity. By operator monotonicity of the trace, we further have $\tr[\Nb_k] \geq \tr[\EE[\Jbhat_k...\Jbhat_1^2...\Jbhat_k]] \geq \tr[\Mb_k]$, where $\Nb_k \succeq \EE[\Jbhat_k...\Jbhat_1^2...\Jbhat_k] \succeq \Mb_k$ under the Loewner ordering (see Appendix \ref{section:notation}). The technical challenge in our proof lies in composing an inductive argument to define matrices $\Nb_k$ and $\Mb_k$. We opt for an approach that directly bounds the matrix of the $k$-step linear dynamics under the Loewner ordering, which does introduce greater technical complexity compared to using norm inequalities, as seen in previous work. However, this added complexity is essential to accurately account for the alignment in the unstable eigenvectors of each $\Jbhat_i$, allowing us to provide a thorough characterization of the instability of SGD dynamics.

\begin{lemma}\label{lemma:sgd_stability_condition}
    Let $\Jbhat_i$ be independent Jacobians of SGD dynamics described in Definition \ref{def:sgd_dynamics}.
    
    (i) If
    \begin{gather*}
        \lambda_1(\Hb) > \frac{2}{\eta} ~~~\text{or}~~~\lim_{k\rightarrow \infty} \left(\frac{\eta^2}{nB} - \frac{\eta^2}{n^2}\right)^{k}\sum_{y_1...,y_{k}=1}^n \|\Hb_{y_{k}}...\Hb_{y_1}\|_F^2 = \infty,
    \end{gather*}
    then $\lim_{k\rightarrow \infty} \EE\|\Jbhat_k...\Jbhat_1\|_F^2 = \infty$.

    (ii) If, for some $\epsilon \in (0, 1)$,
    \begin{gather*}
        \frac{\epsilon}{\eta} < \lambda_i(\Hb) < \frac{2 - \epsilon}{\eta} \text{ for all }i\in[d] ~~~\text{and}~~~\lim_{k\rightarrow \infty} \frac{1}{\epsilon^k}\left(\frac{\eta^2}{nB} - \frac{\eta^2}{n^2}\right)^{k}\sum_{y_1...,y_{k}=1}^n \|\Hb_{y_{k}}...\Hb_{y_1}\|_F^2 = 0,
    \end{gather*}
    then $\lim_{k\rightarrow \infty} \EE\|\Jbhat_k...\Jbhat_1\|_F^2 = 0$.
\end{lemma}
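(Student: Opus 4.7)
The plan is to reduce both parts of the lemma to analyzing a single completely positive, Hilbert--Schmidt self-adjoint superoperator $\Phi(M) := \EE[\Jbhat M \Jbhat]$ on $d \times d$ matrices. A direct computation with the Bernoulli sampling model (using $\EE[\xi_i\xi_j] = B/n$ if $i=j$ and $(B/n)^2$ otherwise) yields
\begin{equation*}
    \Phi(M) \;=\; \Jb M \Jb + c \sum_{i=1}^n \Hb_i M \Hb_i, \qquad c \;:=\; \frac{\eta^2}{nB} - \frac{\eta^2}{n^2},
\end{equation*}
which I decompose as $\Phi = \mathcal{J} + \mathcal{N}$ with $\mathcal{J}(M) := \Jb M \Jb$ and $\mathcal{N}(M) := c\sum_i \Hb_i M \Hb_i$; both pieces are completely positive, Loewner-monotone, and self-adjoint under the trace inner product. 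Independence of the $\Jbhat_t$ combined with iterated conditional expectation gives the identity $\EE \|\Jbhat_k \cdots \Jbhat_1\|_F^2 = \tr(\Phi^k(\Ib))$, so the lemma reduces to sandwiching $\Phi^k(\Ib)$ between PSD matrices $\Mb_k \preceq \Phi^k(\Ib) \preceq \Nb_k$ in the Loewner order and comparing traces, as previewed in the paragraph preceding the lemma.

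For divergence~(i), $\Mb_k$ is read off from whichever summand of $\Phi$ satisfies the hypothesis. If $\lambda_1(\Hb) > 2/\eta$, monotonicity of $\Phi$ gives $\Phi^k(\Ib) \succeq \mathcal{J}^k(\Ib) = \Jb^{2k}$, and since $|1-\eta\lambda_1(\Hb)|>1$ we obtain $\tr(\Jb^{2k}) \geq (1-\eta\lambda_1(\Hb))^{2k} \to \infty$. Otherwise, iterating the noise piece alone gives, by induction, $\Phi^k(\Ib) \succeq \mathcal{N}^k(\Ib) = c^k \sum_{y\in [n]^k} \Hb_{y_k}\cdots\Hb_{y_1}\Hb_{y_1}\cdots\Hb_{y_k}$; symmetry of each $\Hb_i$ converts the trace to $c^k \sum_y \|\Hb_{y_k}\cdots\Hb_{y_1}\|_F^2$, which diverges by assumption.

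The upper bound in~(ii) is the main technical load, and I expect this to be the hardest step. The obstacle is that the convenient scalar inequality $\tr(\Jb Y \Jb) \leq (1-\epsilon)^2 \tr(Y)$ for $Y \succeq 0$ does \emph{not} lift to the Loewner statement $\Jb Y \Jb \preceq (1-\epsilon)^2 Y$ (it fails already at $Y = \sum_i \Hb_i^2$), so peeling $\Jb$-factors one at a time via $\|AB\|_F \leq \|A\|_2\|B\|_F$ either destroys the Frobenius-norm alignment encoded in $\|\Hb_{y_m}\cdots\Hb_{y_1}\|_F^2$ or overcounts through operator norms. My plan is to establish the binomial trace bound
\begin{equation*}
    \tr(\Phi^k(\Ib)) \;\leq\; \sum_{m=0}^{k} \binom{k}{m}(1-\epsilon)^{2(k-m)}\, a_m, \qquad a_m \;:=\; c^m \!\!\sum_{y \in [n]^m}\!\! \|\Hb_{y_m}\cdots\Hb_{y_1}\|_F^2 \;=\; \tr(\mathcal{N}^m(\Ib)),
\end{equation*}
by expanding $\Phi^k = (\mathcal{J}+\mathcal{N})^k$ into a sum over patterns $\sigma \in \{\mathcal{J},\mathcal{N}\}^k$, aggregating each pattern's contribution over its noise indices, and applying the Hilbert--Schmidt Cauchy--Schwarz bound $|\tr(M^2)| \leq \|M\|_F^2$. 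For the test pattern $\mathcal{N}\mathcal{J}\mathcal{N}$, the strategy already works cleanly: aggregation gives $\sum_{i,j}\|\Hb_i \Jb \Hb_j\|_F^2 = \tr\bigl((\Jb \textstyle\sum_i \Hb_i^2)^2\bigr) \leq \|\Jb\|_2^2 \tr\bigl((\sum_i \Hb_i^2)^2\bigr) = (1-\epsilon)^2 \sum_{i,j}\|\Hb_i \Hb_j\|_F^2$, producing exactly the required $(1-\epsilon)^{2(k-m)}$ scaling; the double self-adjointness identity $\tr(\mathcal{N}(\Phi^k(\Ib))) = \tr(\Phi^k(\mathcal{N}(\Ib)))$ drives the inductive reshaping of the cross term $\tr(\mathcal{N}(\Phi^k(\Ib)))$ that arises from the trace recursion $\tr(\Phi^{k+1}(\Ib)) \leq (1-\epsilon)^2\tr(\Phi^k(\Ib)) + \tr(\mathcal{N}(\Phi^k(\Ib)))$. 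Extending the pattern-by-pattern aggregation to arbitrary interleavings of $\mathcal{J}$ and $\mathcal{N}$, without losing the alignment information, is the subtle combinatorial part of the argument.

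Given the binomial bound, the conclusion follows by a standard split. For any $\delta > 0$ the hypothesis $a_k/\epsilon^k \to 0$ yields $m_0$ with $a_m \leq \delta\epsilon^m$ for $m \geq m_0$; the low-$m$ portion is polynomial in $k$ times $(1-\epsilon)^{2k}$ and vanishes, while the tail is bounded via the binomial theorem by $\delta\sum_m \binom{k}{m}(1-\epsilon)^{2(k-m)}\epsilon^m = \delta((1-\epsilon)^2 + \epsilon)^k = \delta(1 - \epsilon + \epsilon^2)^k \to 0$, since $\epsilon \in (0,1)$ forces $1 - \epsilon + \epsilon^2 < 1$. Hence $\tr(\Phi^k(\Ib)) \to 0$, completing the argument.
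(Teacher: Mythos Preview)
Your superoperator framing $\Phi = \mathcal{J} + \mathcal{N}$ and the identity $\EE\|\Jbhat_k\cdots\Jbhat_1\|_F^2 = \tr(\Phi^k(\Ib))$ are exactly how the paper's argument is organized, just in cleaner language. For part~(i) the paper carries the combined lower bound $\Mb_r = \Jb^{2r} + \mathcal{N}^r(\Ib)$ through a single Loewner induction rather than splitting into $\mathcal{J}^k(\Ib)$ and $\mathcal{N}^k(\Ib)$ as you do, but the content is identical and your version via monotonicity of $\Phi$ over either summand is arguably simpler.

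For part~(ii) the comparison is more pointed. The paper proves precisely your target binomial bound, but as a \emph{Loewner} inequality $\Phi^k(\Ib) \preceq \sum_{r=0}^k (1-\epsilon)^{2(k-r)}\binom{k}{r}\Nb_r$ with $\Nb_r := \mathcal{N}^r(\Ib)$, by induction on $k$. The inductive step is exactly the one you dismiss: after writing $\EE[\Jbhat_k Y \Jbhat_k] = \Jb Y\Jb + c\sum_i \Hb_i Y\Hb_i$ for $Y = \sum_r(1-\epsilon)^{2(k-1-r)}\binom{k-1}{r}\Nb_r$, the paper asserts $\Jb Y\Jb \preceq (1-\epsilon)^2 Y$ and closes the induction via Pascal's rule. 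No justification is given, and your objection is correct---this Loewner step fails in general for $\|\Jb\|_2 \leq 1-\epsilon$ and $Y\succeq 0$ (e.g.\ $\Jb$ diagonal with a zero eigenvalue, $Y$ rank-one and off-axis). So the paper does not resolve the obstacle you identify; it steps over it.

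Your trace-level workaround is therefore the right instinct, but it is genuinely incomplete as written. The self-adjointness move $\tr(\mathcal{N}(\Phi^k(\Ib))) = \tr(\Phi^k(\mathcal{N}(\Ib)))$ buys you one commutation, but iterating it forces you to control $\tr(\mathcal{N}(\Phi^j(\mathcal{N}^m(\Ib))))$, and this is \emph{not} equal to $\tr(\Phi^j(\mathcal{N}^{m+1}(\Ib)))$ in general since $\Phi$ and $\mathcal{N}$ need not commute as operators. Your $\mathcal{N}\mathcal{J}\mathcal{N}$ example works because the lone $\mathcal{J}$ sits symmetrically; asymmetric patterns do not reduce the same way. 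So the ``subtle combinatorial part'' you flag is the real gap in your proposal, and the paper does not supply an alternative you can borrow. The endgame---summing the binomial bound via $(1-\epsilon)^2+\epsilon = 1-\epsilon+\epsilon^2 < 1$---matches the paper exactly.
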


Notice that part (i) and part (ii) of this theorem are complementary in the sense that the condition of part (ii) is nearly the negation of the condition in part (i) except for the additional $\epsilon$ factor. In a sense, the parameter $\epsilon$ captures the balance of how close we are to instability in GD dynamics, i.e. $\lambda_1(\Hb) > \frac{2}{\eta}$, and how much additional instability is added by the stochasticity in the dynamics of SGD. \footnote{We believe that the requirement in part (ii) that $\lambda_i(\Hb) > \frac{\epsilon}{\eta}$ could likely be removed by more carefully accounting for alignment of the negligible eigenvectors of $\Jbhat$ and $\Jb$ and relaxing the theorem to imply boundedness of the limit. However, given that the role of part (ii) in the theorem is only to contrast with part (i), we do not think this is high priority for the purpose of this paper.}

To provide a more detailed explanation of this intuition, let us consider the setting where $B \ll n$. The second term from part (ii) of the above Lemma can be approximated as:
\begin{gather}
    \left(\frac{\eta^2}{nB} - \frac{\eta^2}{n^2}\right)^{k}\sum_{y_1...,y_{k}=1}^n \|\Hb_{y_{k}}...\Hb_{y_1}\|_F^2
    \approx \frac{\eta^{2k}}{n^kB^k}\sum_{y_1...,y_{k}=1}^n \|\Hb_{y_{k}}...\Hb_{y_1}\|_F^2
    = \frac{\eta^{2k}}{B^k} \cdot \EE\|\Ab_k...\Ab_1\|_F^2,
\end{gather}
where $\Ab_i$ is independently sampled uniformly from the set $\{\Hb_i\}_{i \in [n]}$. Interestingly, this implies that the effect of the parameters $B$ and $\eta$ can be decoupled from the structure within $\{\Hb_i\}_{i \in [n]}$. In other words, if we knew the minimal learning rate $\eta$ at which linearized SGD with a fixed batch size diverges, then we would immediately be able to determine which parameter pairs $(\eta, B)$ are divergent at the given point $\wb^*$, since the term $\EE\|\Ab_k...\Ab_1\|_F^2$ does not change with these hyperparameters.

Note that determining whether the quantity $\frac{\eta^{2k}}{B^k} \EE\|\Ab_k...\Ab_1\|_F^2$ diverges for arbitrary inputs of $\eta$, $B$, and set of arbitrary symmetric matrices $\{\Hb_i\}_{i \in [n]}$ would be an NP-Hard problem (see Section 3.3 of \citet{huang2022matrix}). Even in our case, where we have the additional constraint that each $\Hb_i$ is PSD, we are unaware of an efficient method to determine whether $\frac{\eta^{2k}}{B^k} \EE\|\Ab_k...\Ab_1\|_F^2$ diverges, motivating our simplified sufficient condition in Theorem \ref{thm:divergence_simple}.

\section{Experiments}

In this section, we support our prior theorems by empirically evaluating the behavior of SGD on synthetic optimization problems with additively decomposable loss functions. The high-level points our experiments support are:
\begin{noindlist}
    \item Parameter tuples that Theorem \ref{thm:divergence_simple} guarantees divergence do indeed diverge.
    \item Parameter tuples that Theorem \ref{thm:simplified_optimality} guarantees no divergence indeed do not diverge.
    \item The coherence measure $\sigma$ has an important effect on the instability of SGD.
    \item Our theoretical results hold when using SGD that samples without replacement.
\end{noindlist}

The first two points outlined above serve as validation for the accuracy and soundness of our theoretical results and proofs. The third point enhances the rationale for adopting the Hessian coherence measure we introduce. Lastly, the fourth point offers justification for employing SGD with Bernoulli sampling in our theoretical analysis, as its behavior mirrors that of the more prevalent SGD approach that samples without replacement. To ensure reproducibility, we include all our implementations in the supplementary material.

\subsection{Experiment setup}

We leverage the construction used in the proof of Theorem \ref{thm:simplified_optimality} to verify our predictions empirically, which offers two advantages: 1) we may apply the analysis of Theorem \ref{thm:simplified_optimality} for a condition that guarantees no divergence, and 2) the construction is parameterized by $\sigma$, and so we may easily test the effect of varying $\sigma$. In this construction, we set $\Hb_i = m \cdot \eb_1\eb_1^T$ for all $i \in [\sigma]$ and $\Hb_i = m\cdot \eb_{i - \sigma + 1}\eb_{i - \sigma + 1}^T$ otherwise, with $m = \frac{2n}{\sigma}$. We set the dimension of the space to $n - \sigma + 1$, so there is a unique minimizer of the loss, as this does not affect divergence. Notice that this construction essentially interpolates the two extreme settings in Section \ref{section:coherence_measure} as $\sigma$ varies from $\sigma=1$ to $\sigma=n$. Additionally, note that $\lambda_1(\Hb) = 2$ by construction. In our experiments,  $\eta\le 1$, hence the first condition of Theorem~\ref{thm:divergence_simple}, i.e., $\lambda_1(\Hb) > 2/\eta$ will not hold or be relevant to characterizing stability. 

The loss function that corresponds to the set of Hessians $\{\Hb_i\}_{i\in [n]}$ is given by the additively decomposable quadratic function $L(\wb) = \frac{1}{n}\sum_{i=1}^n \ell_i(\wb)$, where $\ell_i(\wb) = \wb^T\Hb_i\wb$. Note that, for this construction of $\{\Hb_i\}_{i\in[n]}$, $\ell_i(\wb)$ is particularly easy to compute, as it is equivalent to squaring and rescaling a single entry of $\wb$.

Across all experiments, we set $n = 100$. For each set of parameters $(B, \eta, \sigma)$, we determine whether the combination leads to divergence or not by executing SGD for a maximum of $1000$ steps. Specifically, we classify a tuple as divergent if, in the majority of the five repetitions, the norm of the parameter vector $\wb$ increases by a factor of $1000$. Conversely, we terminate SGD prematurely and classify the point as not divergent if the norm of $\wb$ decreases by a factor of $1000$ during the course of the SGD trajectory.

It is possible to construct a regression problem with mean-squared error that results in exactly the same optimization we describe. However, we note that Theorem 3.3 in \citet{wu2022alignment} does not directly apply to this problem, since the corresponding loss is zero at the optimum, and hence the loss-scaled alignment factor (see Equation \ref{def:wu_stable} below) of \citet{wu2022alignment} is undefined at the optimum. Therefore, the condition of Theorem 3.3 in \citet{wu2022alignment} (see Theorem \ref{thm:wu_thm}) cannot be satisfied.

\subsection{Experimental results}

\subsubsection{Effect of coherence measure and batch size}

First, we look at how the stability of SGD changes as we vary coherence measure $\sigma$ and batch size $B$. We show results for two fixed values of the learning rate, $\eta = 0.8$ and $\eta = 0.5$.

\begin{figure}
     \centering
     \begin{subfigure}[b]{0.48\linewidth}
         \centering
         \includegraphics[width=\linewidth]{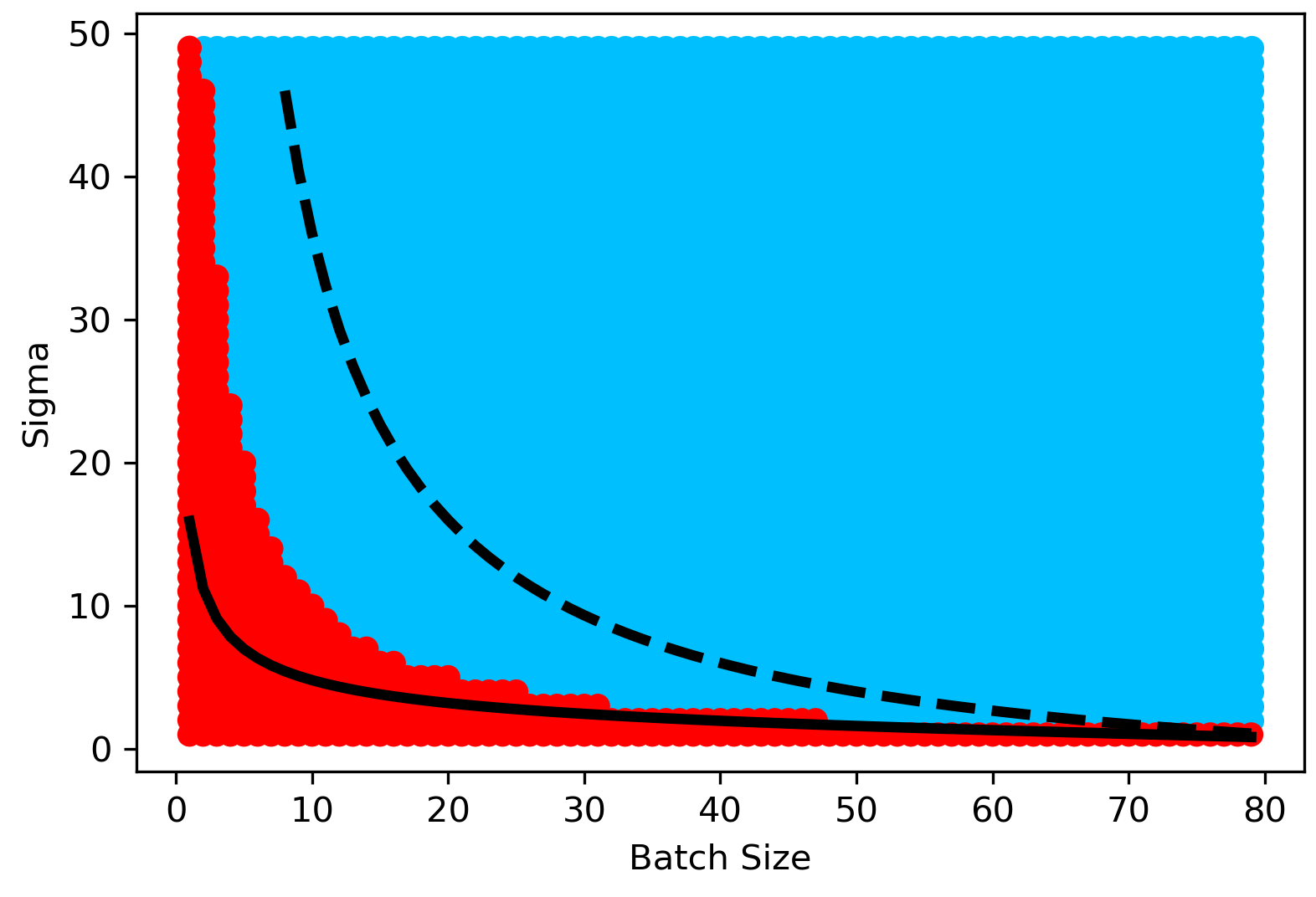}
         \caption{Learning rate is set to $\eta=0.8$.}
     \end{subfigure}
     \hfill
     \begin{subfigure}[b]{0.48\linewidth}
         \centering
         \includegraphics[width=\linewidth]{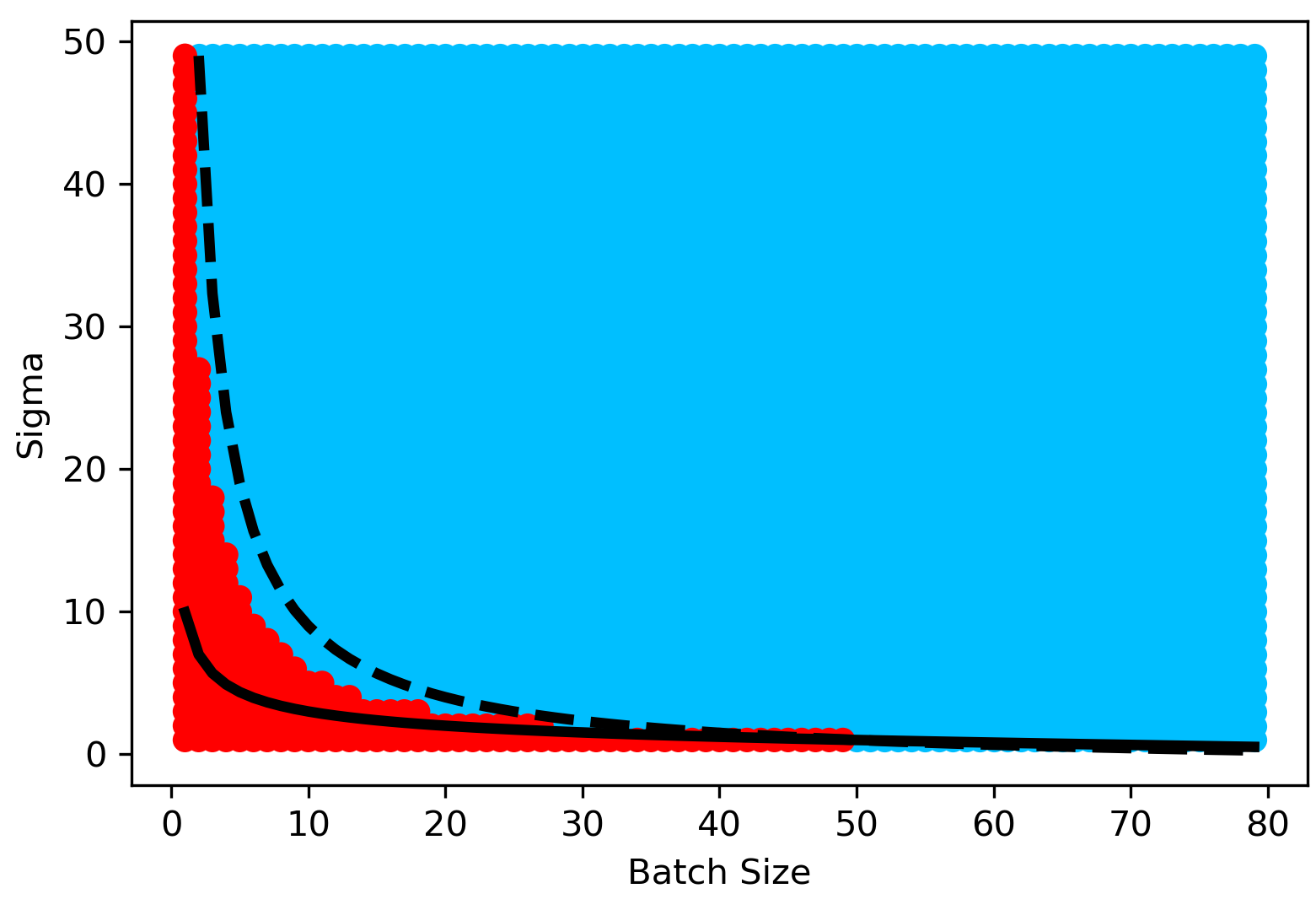}
         \caption{Learning rate is set to $\eta=0.5$.}
     \end{subfigure}
        \caption{The red area indicates where SGD diverges and blue where it does not diverge among parameter pairs $(\sigma, B)$. The solid black line is where the condition of Theorem \ref{thm:divergence_simple} attains equality and the dashed line is where the condition of Theorem \ref{thm:simplified_optimality} attains equality.}
        \label{fig:sigma_v_batch}
\end{figure}

Two key observations in Figure \ref{fig:sigma_v_batch} are that all tuples $(\sigma, B)$ that are below the boundary given by Theorem \ref{thm:divergence_simple} indeed diverge. This is visually shown as all points below the solid black line are red (besides some aberration due to visual smoothing). Additionally, the fact that all points above the dashed line are blue indicate that tuples $(\sigma, B)$ which the proof of Theorem \ref{thm:simplified_optimality} guarantees converge indeed converge.

An intriguing observation is the pattern where the gap between the upper and lower bounds diminishes as the batch size increases. Specifically, we notice that the lower bound more closely aligns with the actual boundary between divergence and convergence across all batch sizes when $\eta = 0.5$.
However, the upper bound is closer to the true boundary when $\eta = 0.8$.

Finally, we observe that the coherence measure $\sigma$ exerts a substantial influence on the stability of SGD. For small values of $\sigma$, SGD demonstrates instability even at high values of $B$. This observation underscores the importance of considering the geometry of the loss surface in understanding the behavior of SGD. Furthermore, it highlights that the coherence measure is an effective tool for capturing and accounting for the contribution of loss surface geometry to the stability of SGD.

\subsubsection{Effect of batch size and learning rate}

Next, we examine how the stability of SGD evolves when we manipulate the batch size $B$ and the learning rate $\eta$. For this analysis, we maintain a fixed value of $\sigma = 5$, which provides a clear boundary given the granularity of the point grid. We show both the log-scale plot, since learning rate generally varies on a log-scale, and the linear-scale plot since we expect the relationship between $B$ and $\eta^2$ to be roughly linear in the boundary.

\begin{figure}[h]
     \centering
     \begin{subfigure}[b]{0.48\linewidth}
         \centering
         \includegraphics[width=\linewidth]{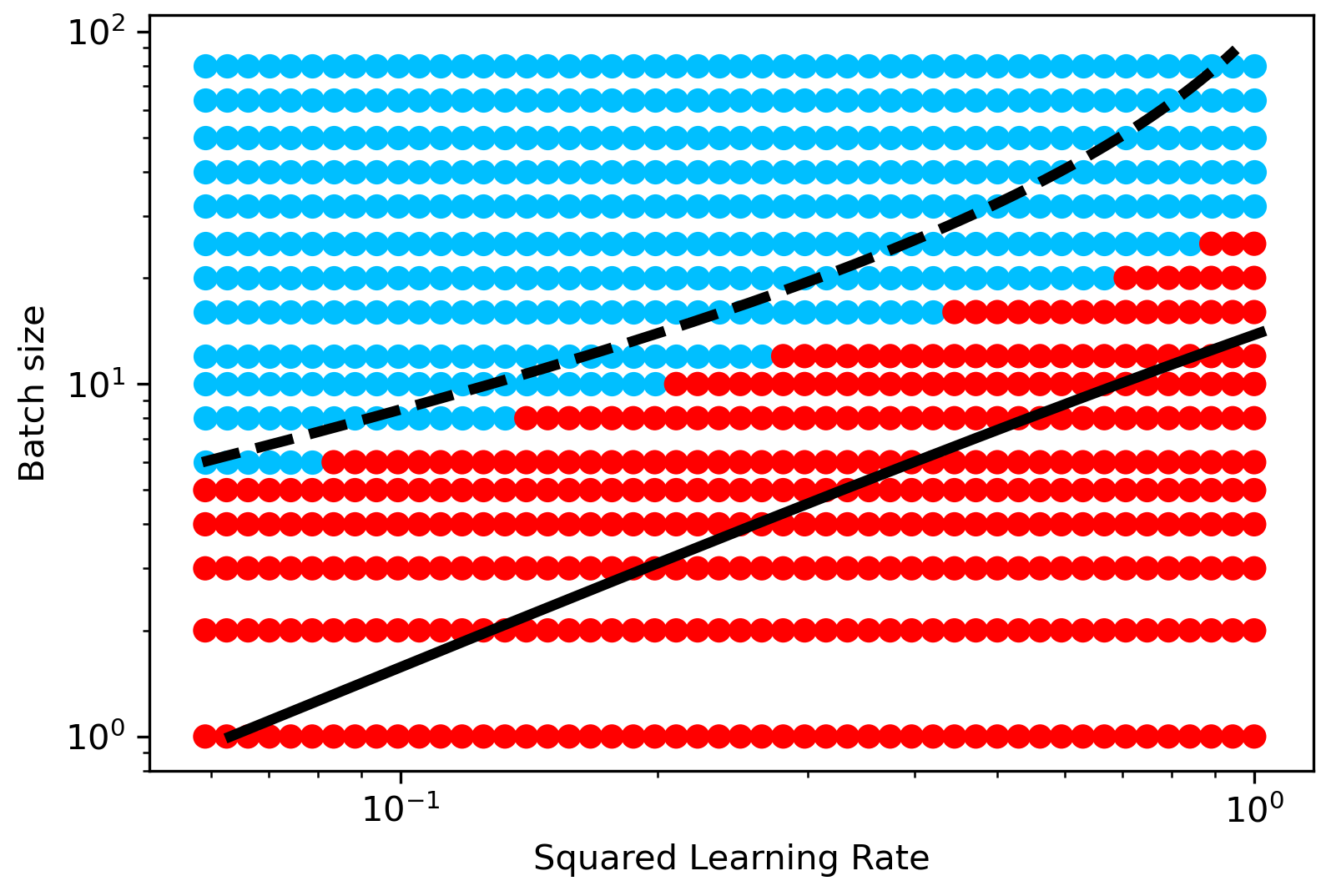}
         \caption{$\sigma=5$. Log-scale.}
     \end{subfigure}
     \hfill
     \begin{subfigure}[b]{0.48\linewidth}
         \centering
         \includegraphics[width=\linewidth]{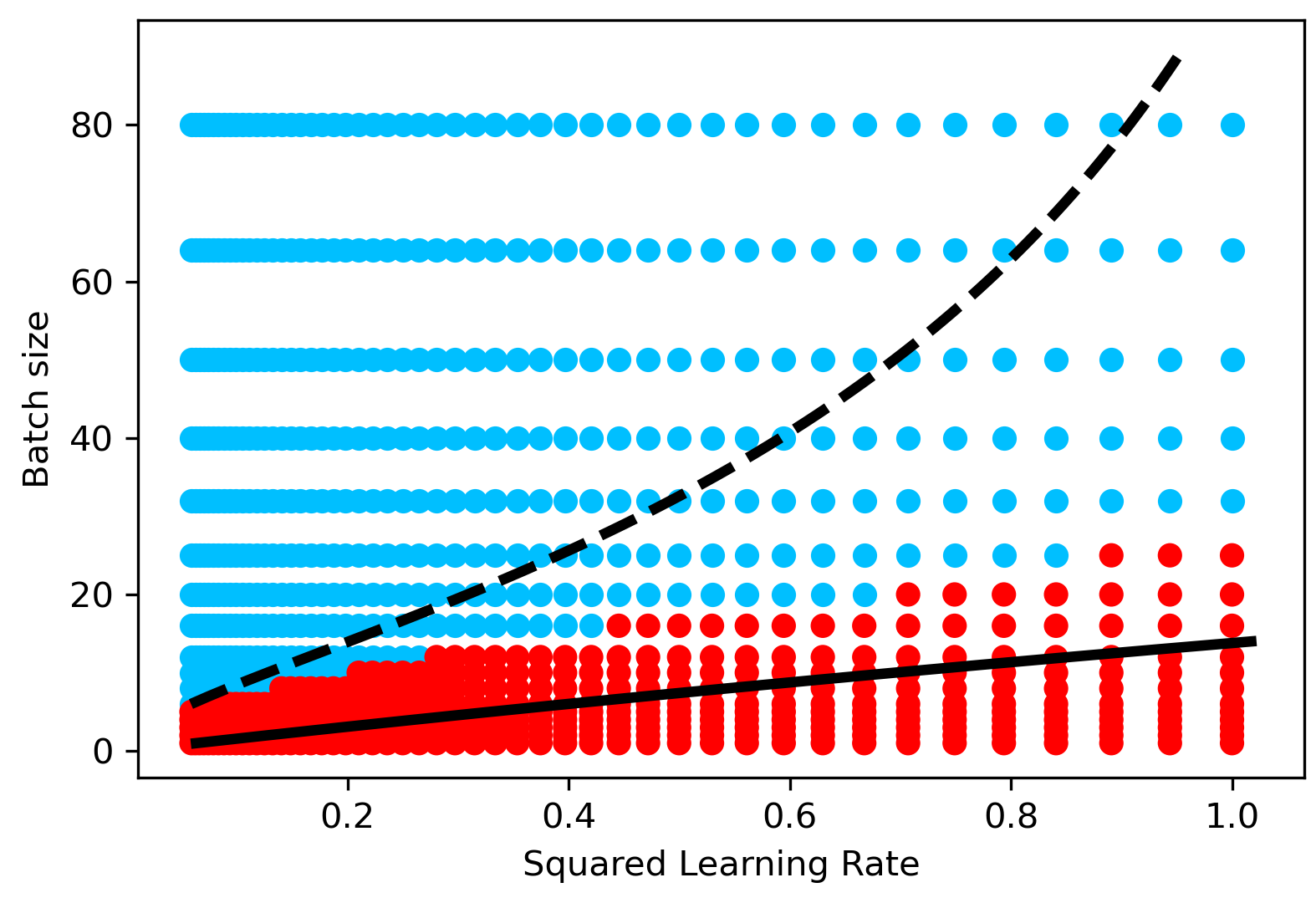}
         \caption{$\sigma=5$. Linear-scale.}
     \end{subfigure}
        \caption{The red area indicates where SGD diverges and grey where it does not diverge among parameter pairs $(\eta, B)$. We plot the squared value of $\eta^2$ to make the linear relation between $B$ and $\eta^2$ clearer. The solid black line is where the condition of Theorem \ref{thm:divergence_simple} attains equality and the dashed line is where the condition of Theorem \ref{thm:simplified_optimality} attains equality.}
        \label{fig:batch_v_lr}
\end{figure}

The plots displayed in Figure \ref{fig:batch_v_lr} further corroborate the validity of Theorem \ref{thm:divergence_simple} and Theorem \ref{thm:simplified_optimality}. Additionally, the pattern continues to support the facts that the lower bound condition more closely approximates the true boundary as learning rate decreases, and the upper bound provides a tighter approximation to the true boundary as learning rate increases.

\section{Conclusion}

We present precise yet interpretable, necessary and sufficient conditions to determine when a point $\wb^*$ is stable under linearized SGD dynamics. The sufficient condition in Theorem \ref{thm:divergence_simple} relies on a novel coherence measure $\sigma$ that summarizes relevant information in the loss surface geometry. We next list some open questions our work raises:
\begin{itemize}
    \item In future research endeavors, it would be intriguing to close the gap between Theorem \ref{thm:divergence_simple} and Theorem \ref{thm:simplified_optimality} to establish the actual dependency of SGD stability on $\sigma$ and the hyperparameters of SGD. 
    \item Additionally, it would be interesting to empirically measure the value of the coherence measure $\sigma$ in realistic neural networks. Acquiring knowledge about the practical range of values that $\sigma$ can assume would enhance the utility of the theoretical contributions provided here for predicting the behavior of SGD in real-world scenarios. Developing efficient approaches to approximate $\sigma$ in large neural networks would represent a valuable step toward achieving this objective.
    \item We may also consider extending the same proof techniques to characterize the stability of sharpness-aware methods \citep{behdin2023msam, foret2020sharpness, gsampaper, looksam, kwkp21, kilh22}, which are commonly employed for training many overparameterized models.
    \item Along these lines, it would also be useful to consider whether the stability of SGD with momentum or other adaptive gradient methods could be analyzed with this approach.
    \item The convergence analysis in Lemma \ref{lemma:sgd_stability_condition} could possibly used to derive fine-grained local convergence rates of SGD depending on Hessian alignment.
\end{itemize}

\section*{Acknowledgments}

GD was partially supported by NSF AF 1814041, NSF FRG 1760353, and DOE-SC0022085. RK would like to acknowledge support from AnalytiXIN Indiana.

\small
\bibliography{bibliography}
\bibliographystyle{iclr2024_conference}

\appendix
\onecolumn

\section{Proofs}\label{section:proofs}

\subsection{Proof preliminaries}\label{section:notation}

In this section, we provide notation and necessary background for the following proofs. 

\textbf{Notation:} Let $[n]$ denote $1, 2,..., n$. Let $\eb_i$ denote $i$-th canonical basis vector. Let $\operatorname{Bern}(p)$ be Bernoulli distribution with parameter $p$. 
Let $\zero$ denote the all-zero matrix or vector, where the dimension will be clear from context. Let $\one$ be defined similarly as the all-ones matrix or vector. Let $\|\Ab\|_2$ denote the spectral norm of matrix $\Ab$. Let $\lambda_i(\Ab)$ denote the $i$-th largest eigenvalue of the matrix $\Ab$. Let $\|\Ab\|_F$ denote the Frobenius norm of matrix $\Ab$. Let $\|\Ab\|_{\Scal_p}$ denote $p$-Schatten norm of matrix $\Ab$, that is, the $p$-norm of the vector of singular values of $\Ab$.

\begin{fact}\label{fact:l1_l2_inequality} \textbf{$\ell_1$-$\ell_2$ Norm Inequality:} For any vector $\xb \in \R^d$, $\|\xb\|_2 \leq \|\xb\|_1 \leq \sqrt{d}\|\xb\|_2$.
\end{fact}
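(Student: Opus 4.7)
The plan is to establish the two inequalities separately, each via a standard one-line argument. Both are completely elementary, so the "main obstacle" here is really just selecting the cleanest presentation; there is no substantive difficulty.

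For the lower bound $\|\xb\|_2 \le \|\xb\|_1$, I would square both sides and expand: writing $\|\xb\|_1^2 = \bigl(\sum_{i=1}^d |x_i|\bigr)^2 = \sum_{i=1}^d x_i^2 + 2\sum_{1 \le i < j \le d} |x_i||x_j|$, which is at least $\sum_{i=1}^d x_i^2 = \|\xb\|_2^2$ since the cross terms are nonnegative. Taking square roots yields the claim. (Alternatively, one could appeal to monotonicity of $\ell_p$ norms in $p$, but the direct expansion is self-contained.)

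For the upper bound $\|\xb\|_1 \le \sqrt{d}\,\|\xb\|_2$, I would apply the Cauchy-Schwarz inequality to the vectors $|\xb| = (|x_1|, \dots, |x_d|)$ and $\one = (1, \dots, 1) \in \R^d$. Concretely, $\|\xb\|_1 = \sum_{i=1}^d |x_i| \cdot 1 = \langle |\xb|, \one \rangle \le \| |\xb| \|_2 \cdot \|\one\|_2 = \|\xb\|_2 \cdot \sqrt{d}$, which is the desired inequality.

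Combining the two displays completes the proof. Equality in the lower bound holds iff $\xb$ has at most one nonzero coordinate, and equality in the upper bound holds iff all $|x_i|$ are equal (the equality case of Cauchy-Schwarz), but these equality characterizations are not needed for Fact~\ref{fact:l1_l2_inequality} and can be omitted.
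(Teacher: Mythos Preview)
Your proof is correct. The paper itself states this as a standard fact without proof, so your elementary argument (expanding the square for the lower bound, Cauchy--Schwarz against $\one$ for the upper bound) is entirely appropriate and matches the level at which the paper treats the result.
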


\begin{fact}
\label{def:rbc}
    \textbf{Recursive Formula for Binomial Coefficients:} For all $n,k\in\mathbb{N}$ such that $k \leq n$, the binomial coefficients satisfy the following recursive formula:
    \begin{gather*}
        {n \choose k} = {n - 1 \choose k - 1} + {n - 1 \choose k}.
    \end{gather*}
\end{fact}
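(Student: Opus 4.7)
The plan is to give a short combinatorial proof, with an algebraic verification as a backup, since the identity is Pascal's rule and admits a clean bijective argument. First I would fix an arbitrary $n$-element set, say $[n]=\{1,\dots,n\}$, and recall that $\binom{n}{k}$ counts the number of $k$-element subsets of $[n]$. I would then distinguish a specific element, say $n$, and partition the collection of $k$-subsets of $[n]$ into two disjoint classes: those that contain $n$, and those that do not.

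Next I would count each class. A $k$-subset of $[n]$ containing $n$ is determined by choosing its remaining $k-1$ elements from $[n-1]$, giving $\binom{n-1}{k-1}$ such subsets. A $k$-subset of $[n]$ not containing $n$ is just a $k$-subset of $[n-1]$, giving $\binom{n-1}{k}$ such subsets. Adding these two disjoint counts yields the total $\binom{n}{k}$, which establishes the identity. I would note that the argument is valid for $1 \le k \le n-1$; the boundary cases $k=0$ and $k=n$ are handled by the conventions $\binom{n-1}{-1}=0$ and $\binom{n-1}{n}=0$, after which the identity reduces to $\binom{n}{0}=\binom{n-1}{0}$ and $\binom{n}{n}=\binom{n-1}{n-1}$, both trivially true.

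As a sanity check, I would include a one-line algebraic derivation: starting from the factorial definition,
\begin{equation*}
\binom{n-1}{k-1} + \binom{n-1}{k} = \frac{(n-1)!}{(k-1)!(n-k)!} + \frac{(n-1)!}{k!(n-1-k)!},
\end{equation*}
putting both terms over the common denominator $k!(n-k)!$ and factoring out $(n-1)!$ yields $(n-1)!(k + (n-k))/(k!(n-k)!) = n!/(k!(n-k)!) = \binom{n}{k}$. There is no genuine obstacle here — the only care needed is with the boundary values of $k$, which is why I would explicitly note the conventions for binomial coefficients with out-of-range lower index so that the identity is stated cleanly for all $k \le n$.
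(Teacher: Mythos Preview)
Your proof is correct: the combinatorial partition argument and the algebraic verification are both standard and complete, and your handling of the boundary cases $k=0$ and $k=n$ via the usual conventions is appropriate. The paper itself does not prove this statement; it is listed among the preliminaries as a \emph{Fact} and simply cited where needed (in the inductive step of Lemma~\ref{lemma:sgd_stability_condition}), so there is no paper proof to compare against.
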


The notion of PSD matrices can be used to define a partial order on the set of symmetric matrices as follows.
\begin{definition} 
\textbf{Loewner Order:} The Loewner order is a partial order on the set of positive semidefinite symmetric matrices. For two positive semidefinite matrices $\Ab$ and $\Bb$, we write $\Ab\preceq \Bb$ to denote that $\Bb-\Ab$ is positive semidefinite and $\Ab \prec \Bb$ to denote that $\Bb-\Ab$ is positive definite.
\end{definition}
Note that if $\Ab$ is a PSD matrix, then for any symmetric $\Bb$, $\Bb\Ab\Bb$ must also be PSD.

We frequently use the following properties of the trace in our derivations.

\textbf{Properties of the Trace:}
\begin{itemize}
    \item $\tr[\Ab] = \sum_{i=1}^n \lambda_i(\Ab)$.
    \item Invariance to cyclic permutation: $\tr[\Ab\Bb\Cb] = \tr[\Cb\Ab\Bb]$.
    \item Linearity of the trace: $\tr[c\Ab + \Bb] = c\tr[\Ab] + \tr[\Bb]$, where $c \in \R$.
\end{itemize}

\begin{lemma}\label{lemma:l1-l2_schatten}
    For any matrix $\Mb \in \R^{n \times n}$, $\|\Mb\|_F \leq \|\Mb\|_{\Scal_1} \leq \sqrt{n}\|\Mb\|_F$.
\end{lemma}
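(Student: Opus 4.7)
The plan is to unpack the two Schatten norms in terms of the singular value vector and then invoke the already-stated $\ell_1$–$\ell_2$ norm inequality (Fact~\ref{fact:l1_l2_inequality}). Let $\sigma(\Mb) = (\sigma_1,\dots,\sigma_n) \in \R^n$ denote the vector of singular values of $\Mb$ (padded with zeros if $\Mb$ has fewer than $n$ nonzero singular values). By definition of the Schatten norms, $\|\Mb\|_{\Scal_1} = \|\sigma(\Mb)\|_1$, and by the standard identity $\|\Mb\|_F^2 = \tr(\Mb^\ts \Mb) = \sum_i \sigma_i^2$, we also have $\|\Mb\|_F = \|\sigma(\Mb)\|_2$.

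With this identification, the claimed inequality reduces to the vector norm inequality $\|\sigma(\Mb)\|_2 \leq \|\sigma(\Mb)\|_1 \leq \sqrt{n}\,\|\sigma(\Mb)\|_2$, which is exactly Fact~\ref{fact:l1_l2_inequality} applied to the vector $\sigma(\Mb) \in \R^n$. Substituting back the norm identifications yields $\|\Mb\|_F \leq \|\Mb\|_{\Scal_1} \leq \sqrt{n}\,\|\Mb\|_F$, completing the proof.

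There is no real obstacle here; the only care required is to briefly justify the singular value characterization of the Frobenius norm (via $\|\Mb\|_F^2 = \tr(\Mb^\ts \Mb)$ together with the singular value decomposition) so that the reduction to the finite-dimensional $\ell_1$–$\ell_2$ comparison is clean. The factor $\sqrt{n}$ on the right-hand side arises precisely because $\sigma(\Mb)$ lives in $\R^n$ for an $n \times n$ matrix, matching the dimension in Fact~\ref{fact:l1_l2_inequality}.
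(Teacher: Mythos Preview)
Your proposal is correct and follows essentially the same argument as the paper: identify $\|\Mb\|_F = \|\Mb\|_{\Scal_2}$ and $\|\Mb\|_{\Scal_1}$ with the $\ell_2$- and $\ell_1$-norms of the singular value vector, then apply Fact~\ref{fact:l1_l2_inequality}. The paper's proof is slightly terser but identical in substance.
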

\begin{proof}
    This follows from the fact that $\|\Mb\|_F = \|\Mb\|_{\Scal_2}$ and $\|\Mb\|_{\Scal_p}$ is the $p$-norm of the spectrum of $\Mb$ along with applying the $\ell_1$-$\ell_2$-norm inequality (Fact \ref{fact:l1_l2_inequality}).
\end{proof}

\begin{lemma}\label{lemma:trace_to_frobenius}
    For any length $k$ sequence of square matrices $\Ab_1...\Ab_k \in \R^{d \times d}$, $\tr[\Ab_1\Ab_2...\Ab_k] \leq \sqrt{d}\|\Ab_1\|_F\|\Ab_2\|_F...\|\Ab_k\|_F$.
\end{lemma}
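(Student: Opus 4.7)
The plan is to split the claimed inequality into two standard ingredients: a ``trace to Frobenius'' bound applied to the product $\Mb := \Ab_1\Ab_2\cdots\Ab_k$, and then submultiplicativity of the Frobenius norm under matrix multiplication. Neither step requires new technology, and both fit on a few lines.

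For the first ingredient, observe that $|\tr[\Mb]| \leq \sqrt{d}\,\|\Mb\|_F$ for any $\Mb \in \R^{d\times d}$. One clean way is to invoke Lemma \ref{lemma:l1-l2_schatten} already proved in the appendix: since $|\tr[\Mb]|$ equals the absolute value of the sum of the eigenvalues of $\Mb$ (viewing the diagonal as an inner product with the identity), we have $|\tr[\Mb]| \leq \|\Mb\|_{\Scal_1} \leq \sqrt{d}\,\|\Mb\|_F$. Alternatively, one can argue directly by Cauchy--Schwarz on the diagonal entries, writing $|\tr[\Mb]| = |\sum_{i} \Mb_{ii}| \leq \sqrt{d}\bigl(\sum_i \Mb_{ii}^2\bigr)^{1/2} \leq \sqrt{d}\,\|\Mb\|_F$. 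Either route gives the same constant $\sqrt{d}$, which is exactly what appears in the stated inequality.

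For the second ingredient, I would prove by induction on $k$ that $\|\Ab_1\Ab_2\cdots\Ab_k\|_F \leq \prod_{i=1}^k \|\Ab_i\|_F$. The base case $k=1$ is trivial, and the inductive step reduces to the well-known two-matrix inequality $\|\Ab\Bb\|_F \leq \|\Ab\|_F\|\Bb\|_F$, which itself follows from $\|\Ab\Bb\|_F \leq \|\Ab\|_2\|\Bb\|_F \leq \|\Ab\|_F\|\Bb\|_F$ (since the spectral norm is dominated by the Frobenius norm). Combining this with the first step yields
\[
\tr[\Ab_1\Ab_2\cdots\Ab_k] \;\leq\; |\tr[\Ab_1\cdots\Ab_k]| \;\leq\; \sqrt{d}\,\|\Ab_1\cdots\Ab_k\|_F \;\leq\; \sqrt{d}\,\prod_{i=1}^k \|\Ab_i\|_F,
\]
which is the claim. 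There is no genuine obstacle here; the lemma is a packaging of standard Schatten/Frobenius inequalities into the form needed by the preceding stability arguments, so the main thing to be careful about is just citing Lemma \ref{lemma:l1-l2_schatten} correctly and noting that the bound tolerates the missing absolute value on the left-hand side.
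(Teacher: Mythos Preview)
Your proposal is correct and follows essentially the same two-step route as the paper: bound $\tr[\Mb]$ by $\sqrt{d}\,\|\Mb\|_F$ via Lemma \ref{lemma:l1-l2_schatten}, then apply submultiplicativity of the Frobenius norm. The only minor difference is that the paper explicitly invokes Weyl's Majorant Theorem to justify $\tr[\Mb] \leq \|\Mb\|_{\Scal_1}$ (the passage from eigenvalue sum to singular-value sum needs this for non-normal $\Mb$), whereas your alternative Cauchy--Schwarz argument on the diagonal entries sidesteps that citation entirely and is arguably the cleaner route.
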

\begin{proof}
    First, it follows from Weyl's Majorant Theorem that $\tr[\Ab_1\Ab_2...\Ab_k] \leq \|\Ab_1\Ab_2...\Ab_k\|_{\Scal_1}$ (see Section III.5 \citep{bhatia2013matrix}). Then, applying Lemma \ref{lemma:l1-l2_schatten} implies,
    \begin{gather*}
        \tr[\Ab_1\Ab_2...\Ab_k] \leq \|\Ab_1\Ab_2...\Ab_k\|_{\Scal_1} \leq \sqrt{d}\|\Ab_1\Ab_2...\Ab_k\|_F.
    \end{gather*}
    Finally, we conclude the lemma statment by submultiplicativity of the Frobenius norm.
\end{proof}

\subsection{Proofs}
\textbf{Proof of Theorem \ref{thm:divergence_simple}}
\begin{proof}
Note that $\|\Mb\|_2^2 \leq \|\Mb\|_F^2 \leq d\|\Mb\|_2^2$ for all $\Mb \in \R^{d \times d}$. Therefore, by Lemma \ref{lemma:sgd_stability_condition}, it suffices to show that the condition of the theorem implies that the following quantity diverges towards infinity as $k \rightarrow \infty$:
\begin{align*}
    \left(\frac{\eta^2}{nB} - \frac{\eta^2}{n^2}\right)^{k}\sum_{y_1...,y_{k}=1}^n \|\Hb_{y_{k}}...\Hb_{y_1}\|_F^2
\end{align*}
From here, we lower bound it by the following. Note that $\|\Mb\|_F^2 \geq \frac{1}{d}\|\Mb\|_{\Scal_1}^2$, where $\|\Mb\|_{\Scal_1}$ is the 1-Schatten norm (see Lemma \ref{lemma:l1-l2_schatten}).
\begin{align*}
    \left(\frac{\eta^2}{nB} - \frac{\eta^2}{n^2}\right)^{k}\sum_{y_1...,y_{k}=1}^n \|\Hb_{y_{k}}...\Hb_{y_1}\|_F^2
    &\geq \left(\frac{\eta^2}{nB} - \frac{\eta^2}{n^2}\right)^{k}\sum_{y_1...,y_{k}=1}^n \frac{1}{d}\|\Hb_{y_{k}}...\Hb_{y_1}\|_{\Scal_1}^2 \\
    &\geq \left(\frac{\eta^2}{nB} - \frac{\eta^2}{n^2}\right)^{k}\sum_{y_1...,y_{k}=1}^n \frac{1}{d}\tr[\Hb_{y_{k}}...\Hb_{y_1}]^2 \\
    &\geq \left(\frac{\eta^2}{nB} - \frac{\eta^2}{n^2}\right)^{k}\frac{1}{d}\sum_{y=1}^n \tr[\Hb_y^k]^2 \\
    &\geq \left(\frac{\eta^2}{nB} - \frac{\eta^2}{n^2}\right)^{k}\frac{1}{nd}\left(\sum_{y=1}^n \tr[\Hb_y^k]\right)^2
\end{align*}
The last line follows by again applying the L1-L2 norm inequality. Now we show that the complexity measure $\sigma$ satisfies:
\begin{gather*}
    \frac{n^k}{d^2 \cdot \sigma^k} \cdot \tr[\Hb^k] 
    \leq \sum_{y=1}^n \tr[\Hb_y^k].
\end{gather*}
First, we bound the left-hand term as follows. By Lemma \ref{lemma:trace_to_frobenius} we know that $\tr[\Ab_1\Ab_2...\Ab_k]\leq d\|\Ab_1\|_F\|\Ab_2\|_F...\|\Ab_{k-1}\|_F\|\Ab_k\|_F$. Therefore,
\begin{align*}
    \tr[\Hb^k] 
    &= \frac{1}{n^k}\cdot\sum_{y_1...,y_{k}=1}^n \tr[\Hb_{y_{k}}...\Hb_{y_1}] \\
    &= \frac{1}{n^k}\cdot\sum_{y_1...,y_{k}=1}^n \tr[(\Hb_{y_1}^{1/2}\Hb_{y_{k}}^{1/2})(\Hb_{y_{k}}^{1/2}\Hb_{y_{k-1}}^{1/2})...(\Hb_{y_2}^{1/2}\Hb_{y_1}^{1/2})] \\
    &\leq \frac{d}{n^k}\cdot\sum_{y_1...,y_{k}=1}^n \|\Hb_{y_1}^{1/2}\Hb_{y_{k}}^{1/2}\|_F\cdot\|\Hb_{y_{k}}^{1/2}\Hb_{y_{k-1}}^{1/2}\|_F\cdot...\cdot \|\Hb_{y_2}^{1/2}\Hb_{y_1}^{1/2}\|_F \\
    &= \frac{d}{n^k}\cdot\sum_{y_1...,y_{k}=1}^n \Sb_{y_1, y_k}\Sb_{y_k,y_{k-1}}...\Sb_{y_2,y_1} \\
    &= \frac{d}{n^k}\cdot\tr[\Sb^k] \leq \frac{d^2}{n^k} \cdot \lambda_1(\Sb)^k.
\end{align*}
Therefore, 
\begin{gather*}
    \frac{n^k}{d^2 \cdot \sigma^k} \cdot \tr[\Hb^k] 
    = \frac{n^k \cdot \max_{i\in[n]} \lambda_1(\Hb_i)^k}{d^2 \cdot \lambda_1(\Sb)^k} \cdot \tr[\Hb^k] 
    \leq \max_{i\in[n]} \lambda_1(\Hb_i)^k 
    \leq \sum_{y=1}^n \tr[\Hb_y^k].
\end{gather*}
Starting from where we left off before, 
\begin{align*}
    \EE\tr[\Jbhat_k...\Jbhat_1^2...\Jbhat_k] 
    &\geq \left(\frac{\eta^2}{nB} - \frac{\eta^2}{n^2}\right)^{k}\frac{1}{nd}\left(\sum_{y=1}^n \tr[\Hb_y^k]\right)^2 \\
    &\geq \left(\frac{\eta^2}{nB} - \frac{\eta^2}{n^2}\right)^{k}\frac{1}{nd}\left(\frac{n^k}{d^2 \cdot \sigma^k} \cdot \tr[\Hb^k]\right)^2 \\
    &\geq  \left(\frac{\eta^2}{nB} - \frac{\eta^2}{n^2}\right)^{k}\frac{1}{nd^5} \cdot \frac{n^{2k}}{\sigma^{2k}} \cdot \lambda_1(\Hb)^{2k} \\
    &= \eta^{2k}\left(\frac{n}{\sigma^2 B} - \frac{1}{\sigma^2}\right)^{k}\frac{1}{nd^5}\cdot \lambda_1(\Hb)^{2k}
\end{align*}
We see that by the condition of the theorem that $\lambda_1(\Hb) > \frac{1}{\eta} \cdot \left(\frac{n}{\sigma^2 B} - \frac{1}{\sigma^2 }\right)^{-1/2}$ that this above equation diverges towards infinity as $k \rightarrow \infty$. Hence, we conclude the theorem statement.
\end{proof}


\textbf{Proof of Theorem \ref{thm:simplified_optimality}}
\begin{proof}

Construct $\{\Hb_i\}_{i\in[n]}$ so that $\Hb_i = m \cdot \eb_1\eb_1^T$ if $i \in [\sigma]$ and $\Hb_i = \zero$ otherwise, where $m = \frac{\lambda_1 \cdot n}{\sigma}$. Note that $\lambda_1(\Hb) = \frac{\sigma}{n}\cdot m = \lambda_1$. Furthermore, note that $\|\Hb_i^{1/2}\Hb_j^{1/2}\|_F = m$ if $i,j \in [\sigma]$ and zero otherwise. Therefore, $\Sb$, the entry-wise square root of the Gram matrix (i.e, $\Sb_{ij} = \|\Hb_i^{1/2}\Hb_j^{1/2}\|_F$), has all of the first $\sigma \times \sigma$ entries equal to $m$ and the rest equal to zero. Therefore, $\lambda_1(\Sb) = m\cdot \sigma$. Meanwhile, $\max_{i\in [n]} \lambda_1(\Hb_i) = m$. Hence, we see the Hessian coherence measure for this constructed problem is indeed equal to the chosen value of $\sigma$.

We now show that $\lim_{k \rightarrow \infty} \EE\|\Jbhat_k....\Jbhat_1\|_F^2 < n$. Note that $\Jbhat = \Ib - \eta \sum_{i=1}^n x_i \Hb_i$, where the $x_i$'s are i.i.d. random variables sampled from a Bernoulli distribution with parameter $p = B/n$. A key observation here is that all $\Jbhat$ are diagonal (since $\Ib$ and all $\Hb_i$ are diagonal) and hence $\Jbhat_1...\Jbhat_k$ commute. Therefore,
\begin{gather*}
    \EE\|\Jbhat_k....\Jbhat_1\|_F^2
    = \EE\tr[\Jbhat_k...\Jbhat_1^2...\Jbhat_k]
    = \EE\tr[\Jbhat_k^2...\Jbhat_1^2]
    = \tr[\EE[\Jbhat_1^2]^k],
\end{gather*}
where in the last step we use the fact that all of the $\Jbhat$ are independent and identically distributed. Recall that the trace is the sum of the diagonal entries in a matrix and hence $\tr[\EE[\Jbhat_1^2]^k] = \sum_{i=1}^n \eb_i^T\EE[\Jbhat_1^2]^k\eb_i$. Note that $\eb_i^T\EE[\Jbhat_1^2]^k\eb_i = 1$ when $i \neq 1$, so $\tr[\EE[\Jbhat_1^2]^k] = (n-1) + \eb_1^T\tr[\EE[\Jbhat_1^2]^k]\eb_1$. To show that this quantity is bounded by $n$, we must show that $\eb_1^T\EE[\Jbhat_1^2]^k\eb_1 = (\eb_1^T\EE[\Jbhat_1^2]\eb_1)^k$ is bounded. Following the base case of part (i) in Lemma \ref{lemma:sgd_stability_condition}, we have that:
\begin{gather*}
    \EE[\Jbhat_1^2] 
    = \Ib - 2\eta\Hb + \eta^2\Hb^2 + \left(\frac{\eta^2}{nB} - \frac{\eta^2}{n^2}\right) \sum_{i=1}^n \Hb_i^2
\end{gather*}
We can then write $\eb_1^T\EE[\Jbhat_1^2]\eb_1$ as:
\begin{gather*}
    \eb_1^T\EE[\Jbhat_1^2]\eb_1 = 1 - 2\eta\frac{\sigma}{n}\cdot m + \eta^2\frac{\sigma^2}{n^2}m^2 + \left(\frac{\eta^2}{nB} - \frac{\eta^2}{n^2}\right) \cdot \sigma \cdot m^2,
\end{gather*}
where we used the fact that $\eb_1^T\Hb\eb_1 = \lambda_1(\Hb) = \frac{\sigma}{n}\cdot m$ in our construction and $\eb_1^T\Hb_i^2\eb_1 = m^2$ if $i \in [\sigma]$ and zero otherwise. To show convergence, we must show that the previous equation is less than 1. This is equivalent to the following condition:
\begin{gather*}\label{eqn:lower_stability_example}
    \eta^2\frac{\sigma^2}{n^2} m^2 + \left(\frac{\eta^2}{nB} - \frac{\eta^2}{n^2}\right) \cdot \sigma \cdot m^2 < 2\eta\cdot\frac{\sigma}{n}\cdot m
    \\
    \iff \eta^2\frac{\sigma^2}{n^2} m^2 + \eta^2\frac{\left(\frac{n}{B} - 1\right)}{\sigma} \cdot \frac{\sigma^2}{n^2} \cdot m^2 < 2\eta\cdot\frac{\sigma}{n}\cdot m \\
    \iff \eta^2 \lambda_1(\Hb)^2 + \eta^2\frac{\left(\frac{n}{B} - 1\right)}{\sigma} \cdot \lambda_1(\Hb)^2 < 2\eta\cdot\lambda_1(\Hb) \\
    \iff \lambda_1(\Hb) < \frac{2}{\eta} \cdot \left(1 + \frac{n/B - 1}{\sigma}\right)^{-1} \\
    \iff \lambda_1 < \frac{2\sigma}{\eta} \cdot \left(\sigma + \frac{n}{B} - 1\right)^{-1}.
\end{gather*}
Therefore, we conclude the theorem statement
 
\end{proof}


\textbf{Proof of Lemma \ref{lemma:sgd_stability_condition}}
\begin{proof}
First, note that $\|\Mb\|_F^2 = \tr(\Mb^T\Mb)$.  Since $\Jb$ and $\Jbhat$ are always symmetric, to prove the theorem conclusion, we must show that $\lim_{k\rightarrow \infty} \EE[\tr[\Jbhat_k...\Jbhat_1\Jbhat_1...\Jbhat_k]] = \infty$. We will achieve this by inductively proving a PSD lower bound on the expectation of this product of matrices and then leveraging the operator monotonicity of the trace. To that end, we define the matrix:
    \begin{gather*}
        \Mb_r = \Jb^{2r} + \left(\frac{\eta^2}{nB} - \frac{\eta^2}{n^2}\right)^r\sum_{y_1...,y_r=1}^n \Hb_{y_r}...\Hb_{y_1}^2...\Hb_{y_r}.
    \end{gather*}
We show that $\EE[\Jbhat_k...\Jbhat_1\Jbhat_1...\Jbhat_k] \succeq \Mb_k ~\forall k \in \mathbb{N}$. First, let us start with the base case $r = 1$. Recall that $\Hb = \frac{1}{n}\sum_{i=1}^n \Hb_i$ and $\Hbhat = \frac{1}{B}\sum_{i=1}^n x_i\Hb_i$, where $x_i \sim \operatorname{Bern}(\nicefrac{B}{n})$.
\begin{align*}
    \EE [\Jbhat_1\Jbhat_1] &= \EE[(\Ib - \eta\Hbhat_1)(\Ib - \eta\Hbhat_1)] \\
    &= \EE[\Ib - 2\eta\Hbhat_1 + \eta^2\Hbhat_1^2] \\ 
    &= \EE\left[\Ib - 2\left(\frac{\eta}{B}\sum_{i=1}^n x_i\Hb_i \right) + \left(\frac{\eta}{B}\sum_{i=1}^n x_i\Hb_i \right)^2\right] \\
    &= \Ib - \frac{2\eta}{n}\sum_{i=1}^n \Hb_i + \frac{\eta^2}{B^2}\sum_{i,j=1}^n \EE[x_ix_j]\Hb_i\Hb_j 
\end{align*}
At this point, we have simply substituted in the relevant definitions and rearranged the terms. Now, note that $\EE[x_ix_j] = \frac{B^2}{n^2}$ if $i \neq j$ and $\EE[x_ix_j] = \frac{B}{n}$ otherwise. We substitute this in, and use the fact that $\Hb^2 = \left(\frac{1}{n}\sum_{i=1}^n \Hb_i\right)^2$.
\begin{align*}
    \EE [\Jbhat_1\Jbhat_1] &= 
    \Ib - \frac{2\eta}{n}\sum_{i=1}^n \Hb_i + \frac{\eta^2}{nB}\sum_{i=1}^n \Hb_i^2 + \frac{\eta^2}{n^2} \sum_{i\neq
 j}^n \Hb_i\Hb_j \\
    &= \Ib - \frac{2\eta}{n}\sum_{i=1}^n \Hb_i + \frac{\eta^2}{n^2}\sum_{i,j=1}^n \Hb_i\Hb_j + \left(\frac{\eta^2}{nB} - \frac{\eta^2}{n^2}\right) \sum_{i=1}^n \Hb_i^2 \\
    &= \Ib - 2\eta\Hb + \eta^2\Hb^2 + \left(\frac{\eta^2}{nB} - \frac{\eta^2}{n^2}\right) \sum_{i=1}^n \Hb_i^2 \\
    &= \Mb_1
\end{align*}

Thus, we have proved the base case $\EE[\Jbhat_1^2] \succeq \Mb_1$, which also implies $\EE[\Jbhat_k^2]\succeq \Mb_1$, since they are identically distributed. We next prove the inductive step under the assumption that $\EE[\Jbhat_{k-1}...\Jbhat_1^2...\Jbhat_{k-1}] \succeq \Mb_{k-1}$. By the Law of Total Expectation and linearity,
\begin{align*}
    \EE[\Jbhat_k...\Jbhat_1^2...\Jbhat_k]
    = \EE[\EE[\Jbhat_k...\Jbhat_1^2...\Jbhat_k | \Jbhat_k]]
    = \EE[\Jbhat_k\cdot\EE[\Jbhat_{k-1}...\Jbhat_1^2...\Jbhat_{k-1}]\cdot\Jbhat_k]
    \succeq \EE[\Jbhat_k\Mb_{k-1}\Jbhat_k].
\end{align*}
The arithmetic in the next part of the argument is analogous to the argument in the base case.
\begin{align*}
    \EE[\Jbhat_k\Mb_{k-1}\Jbhat_k]
    &= \EE[(\Ib - \eta\Hbhat_k)\Mb_{k-1}(\Ib - \eta\Hbhat_k)] \\
    &= \Mb_{k-1} - \eta\Mb_{k-1}\Hb - \eta\Hb\Mb_{k-1} + \EE\left[\frac{\eta^2}{B^2}\sum_{i,j=1}^n x_ix_j\Hb_i\Mb_{k-1}\Hb_j\right] \\
    &= \Mb_{k-1} - \eta\Mb_{k-1}\Hb - \eta\Hb\Mb_{k-1} + \frac{\eta^2}{n^2}\sum_{i,j=1}^n \Hb_i\Mb_{k-1}\Hb_j \\
    &~~~~~~~~~~~~~~~~~~~~~~+ \eta^2\left(\frac{1}{nB} - \frac{1}{n^2}\right) \sum_{i=1}^n \Hb_i\Mb_{k-1}\Hb_i \\
    &= \Jb_k\Mb_{k-1}\Jb_k + \eta^2\left(\frac{1}{nB} - \frac{1}{n^2}\right) \sum_{i=1}^n \Hb_i\Mb_{k-1}\Hb_i
\end{align*}
Next, we substitute in the definition of $\Mb_{k-1}$. Recall that if $\Ab$ is symmetric and $\Bb$ is PSD, then $\Ab\Bb\Ab$ is also PSD.
\begin{align*}
    \EE[\Jbhat_k\Mb_{k-1}\Jbhat_k]
    &= \Jb_k\left(\Jb^{2(k-1)} + \left(\frac{\eta^2}{nB} - \frac{\eta^2}{n^2}\right)^{k-1}\sum_{y_1...,y_{k-1}=1}^n \Hb_{y_{k-1}}...\Hb_{y_1}^2...\Hb_{y_{k-1}}\right)\Jb_k\\
    +\eta^2\left(\frac{1}{nB} - \frac{1}{n^2}\right)&\sum_{i=1}^n \Hb_i\left(\Jb^{2(k-1)} + \left(\frac{\eta^2}{nB} - \frac{\eta^2}{n^2}\right)^{k-1} \sum_{y_1...,y_{k-1}=1}^n \Hb_{y_{k-1}}...\Hb_{y_1}^2...\Hb_{y_{k-1}}\right)\Hb_i \\
    &\succeq \Jb^{2k} + \left(\frac{\eta^2}{nB} - \frac{\eta^2}{n^2}\right)^{k}\sum_{y_1...,y_{k}=1}^n \Hb_{y_{k}}...\Hb_{y_1}^2...\Hb_{y_{k}} \\
    &= \Mb_k
\end{align*}
Note that we are able to drop the cross-terms in the first step since $\Ab\Bb\Ab \succeq \zero$, whenever $\Ab$ is symmetric and $\Bb$ is PSD. Therefore, we have show that $\EE[\Jbhat_k...\Jbhat_1^2...\Jbhat_k] \succeq \Mb_k$ for all $k \in [n]$. By the operator monotonicity and linearity of the trace,
\begin{align*}
    \EE[\tr[\Jbhat_k...\Jbhat_1^2...\Jbhat_k]] 
    &= \tr[\EE[\Jbhat_k...\Jbhat_1^2...\Jbhat_k]]
    \geq \tr[\Mb_k] \\
    &= \tr[\Jb^{2k}] + \left(\frac{\eta^2}{nB} - \frac{\eta^2}{n^2}\right)^{k}\tr\left[\sum_{y_1...,y_{k}=1}^n \Hb_{y_{k}}...\Hb_{y_1}^2...\Hb_{y_{k}} \right] \\
    &= \tr[\Jb^{2k}] + \left(\frac{\eta^2}{nB} - \frac{\eta^2}{n^2}\right)^{k}\sum_{y_1...,y_{k}=1}^n \|\Hb_{y_{k}}...\Hb_{y_1}\|_F^2
\end{align*}
Hence, we have proved part (i) of the lemma. We will now prove part (ii). Define the following matrix:
\begin{gather*}
    \Nb_r = \left(\frac{\eta^2}{nB} - \frac{\eta^2}{n^2}\right)^r\sum_{y_1...,y_r=1}^n \Hb_{y_r}...\Hb_{y_1}^2...\Hb_{y_r},
\end{gather*}
and define $\Nb_0 = \Ib$. We will prove that: 
\begin{gather}\label{eqn:binomial_upper_psd}
    \EE[\Jbhat_k...\Jbhat_1^2...\Jbhat_k] \preceq \sum_{r=0}^k (1-\epsilon)^{2(k-r)} {k \choose r} \Nb_r,    
\end{gather}
for all $k\in\mathbb{N}$, where $\epsilon > 0$. First, note that the condition of the theorem implies $-(1-\epsilon)\Ib \prec \Jb \prec (1-\epsilon)\Ib$ for some $\epsilon > 0$. First, note that the base case:
\begin{gather*}
    \EE[\Jbhat_1^2] \preceq (1-\epsilon)^2\Nb_0 + \Nb_1 = (1-\epsilon)^2\Ib + \left(\frac{\eta^2}{nB} - \frac{\eta^2}{n^2}\right)\sum_{i=1}^n \Hb_i^2,
\end{gather*}
holds from our computations in part (i), since $\Jb^2 \preceq (1-\epsilon)^2\Nb_0$. Now for the inductive step. Under the assumption that Equation \ref{eqn:binomial_upper_psd} holds for $k-1$, we show it also hold for $k$. By the same argument as in part (i), we find that:
\begin{align*}
    \EE[\Jbhat_k...\Jbhat_1^2...\Jbhat_k] 
    &\preceq \EE\left[\Jbhat_k\left(\sum_{r=0}^{k-1} (1-\epsilon)^{2(k-1-r)} {k-1 \choose r} \Nb_r\right)\Jbhat_k \right] \\
    &= \Jb_k\left(\sum_{r=0}^{k-1} (1-\epsilon)^{2(k-1-r)} {k-1 \choose r} \Nb_r\right)\Jb_k \\
    &~~~~~~~~~~~~~~~~+ \left(\frac{\eta^2}{nB} - \frac{\eta^2}{n^2}\right)\sum_{i=1}^n \Hb_i\left(\sum_{r=0}^{k-1} (1-\epsilon)^{2(k-1-r)} {k-1 \choose r} \Nb_r\right)\Hb_i \\
    &\preceq (1-\epsilon)^2\left(\sum_{r=0}^{k-1} (1-\epsilon)^{2(k-1-r)} {k-1 \choose r} \Nb_r\right) \\
    &~~~~~~~~~~~~~~~~+ \left(\frac{\eta^2}{nB} - \frac{\eta^2}{n^2}\right)\sum_{i=1}^n \Hb_i\left(\sum_{r=0}^{k-1} (1-\epsilon)^{2(k-1-r)} {k-1 \choose r} \Nb_r\right)\Hb_i \\
    &= \sum_{r=0}^{k-1} (1-\epsilon)^{2(k-r)} {k - 1 \choose r} \Nb_r \\
    &~~~~~~~~~~~~~~~~+  \sum_{r=0}^{k-1} (1-\epsilon)^{2(k-1-r)}{k - 1 \choose r} \cdot \left(\frac{\eta^2}{nB} - \frac{\eta^2}{n^2}\right)\sum_{i=1}^n \Hb_i\Nb_r\Hb_i \\
    &= \sum_{r=0}^{k-1} (1-\epsilon)^{2(k-r)} {k-1 \choose r} \Nb_r +  \sum_{r=0}^{k-1} (1-\epsilon)^{2(k-1-r)}{k-1 \choose r} \Nb_{r+1},
\end{align*}
where the last line follows from $\left(\frac{\eta^2}{nB} - \frac{\eta^2}{n^2}\right)\sum_{i=1}^n \Hb_i\Nb_r\Hb_i = \Nb_{r+1}$.  We can rewrite the second summation to go from $r=1$ to $r = k$ by replacing $r+1$ with $r$ to combine the terms like so:
\begin{gather*}
    \sum_{r=0}^{k-1} (1-\epsilon)^{2(k-r)} {k-1 \choose r} \Nb_r +  \sum_{r=1}^k (1-\epsilon)^{2(k-r)}{k-1 \choose r-1} \Nb_{r} \\ 
    = (1-\epsilon)^{2k} \Nb_0 + \sum_{r=1}^{k-1}\left[(1-\epsilon)^{2(k-r)}{k-1\choose r} + (1-\epsilon)^{2(k-r)}{k-1 \choose r-1}\right] \Nb_r + \Nb_k \\
    = (1-\epsilon)^{2k} \Nb_0 + \sum_{r=1}^{k-1}(1-\epsilon)^{2(k-r)}{k \choose r}\Nb_r + \Nb_k \\
    = \sum_{r=0}^k (1-\epsilon)^{2(k-r)}{k \choose r} \Nb_r.
\end{gather*}
Hence, we have completed the proof by induction. Note that to go from the second to the third line in the previous equation block, we applied the recursive formula for binomial coefficients (see Fact \ref{def:rbc}). Now, doing the same as before, we can conclude that:
\begin{align*}
    \EE\tr[\Jbhat_k...\Jbhat_1^2...\Jbhat_k] 
    &\leq \sum_{r=0}^k (1-\epsilon)^{2(k-r)} {k \choose r} \tr[\Nb_r].
\end{align*}
By the condition of the theorem that $\lim_{k\rightarrow \infty} \left(\frac{\eta^2}{nB} - \frac{\eta^2}{n^2}\right)^{k}\sum_{y_1...,y_{k}=1}^n \|\Hb_{y_{k}}...\Hb_{y_1}\|_F^2 = 0$, we have that:
\begin{align*}
    \tr[\Nb_r] &= \left(\frac{\eta^2}{nB} - \frac{\eta^2}{n^2}\right)^r\sum_{y_1...,y_r=1}^n \tr[\Hb_{y_r}...\Hb_{y_1}^2...\Hb_{y_r}] \\
    &= \left(\frac{\eta^2}{nB} - \frac{\eta^2}{n^2}\right)^r \sum_{y_1...,y_r=1}^n \|\Hb_{y_r}...\Hb_{y_1}\|_F^2,
\end{align*}
goes to zero as $r \rightarrow \infty$. Let $n_r = \tr[\Nb_r]$ and $\delta = (1-\epsilon)^2$. To prove the theorem statement, it suffices to show that $\lim_{k\rightarrow \infty} \sum_{r=0}^k \delta^{k-r}{k \choose r}  n_r = 0$, when $\delta \in (0,1)$ and $(n_r)_{r \in \mathbb{N}}$ is a sequence of positive real numbers converging to zero. By the theorem statement, we know that $\frac{1}{\epsilon^r} \tr[\Nb_r] \rightarrow 0$. Therefore, there exists some constant $C$ such that $\frac{1}{\epsilon^r}\tr[\Nb_r] \leq C$ for all $r \in [n]$, which implies $\tr[\Nb_r] \leq \epsilon^rC$. Therefore, by the Binomial formula:
\begin{gather*}
    \sum_{r=0}^k \delta^{k-r}{k \choose r}  n_r  \leq \sum_{r=0}^k {k \choose r} (1-\epsilon^2)^{k-r} 
    \cdot C\epsilon^r = C\left((1 - \epsilon)^2 + \epsilon\right)^k.
\end{gather*}
This term must go to zero as $k \rightarrow \infty$, since $(1 - \epsilon)^2 + \epsilon = 1 - \epsilon + \epsilon^2 < 1$.
\end{proof}

\section{Result of \citet{wu2022alignment}}\label{section:wu_summary}

Here, we summarize the relevant information needed to state Theorem 3.3 of \citet{wu2022alignment}, which considers the settings where one has access to a dataset of the form ${(\xb_i, y_i)}_{i\in[n]}$, where $\xb_i\in\R^m$, $y_i \in \R$. $f(\xb_i, \wb)$ is a function parameterized by $\wb$ such that $f:\R^m \times \R^d \rightarrow \R$. The loss function applied to each sample is MSE, denoted by $\ell_i(\wb) = |f(\xb_i, \wb) - y_i|^2$, and the total loss over all the samples is given by $L(\wb) = \frac{1}{n}\sum_{i=1}^n \ell_i(\wb)$. The linearized SGD dynamics considered is the same as Definition \ref{def:sgd_dynamics}, except that $\Scal \subset [n]$ is sampled from all size $B$ subsets of $[n]$ uniformly. 

One can now define two critical matrices. Let $\Sigmab(\wb) = \frac{1}{n}\sum_{i=1}^n \nabla \ell_i(\wb)\nabla \ell_i(\wb)^T - \nabla L(\wb)\nabla L(\wb)^T$ be the noise covariance matrix, and let $\Gb(\wb) = \frac{1}{n}\sum_{i=1}^n \nabla f(\xb_i, \wb)\nabla f(\xb_i, \wb)^T$ be the Fisher matrix that characterizes the local geometry of the loss landscape. Next, \citet{wu2022alignment} define the loss-scaled alignment factor (Equation 3 in \citet{wu2022alignment}):
\begin{gather}\label{eqn:wu_alignment}
    \mu(\wb) = \frac{\tr[\Sigmab(\wb)\Gb(\wb)]}{2L(\wb)\|\Gb(\wb)\|_F^2}.
\end{gather}
The last step before stating the theorem is the introduction of the definition of linear stability considered in \citet{wu2022alignment}.
\begin{definition}\label{def:wu_stable}
    (Linear stability in \citet{wu2022alignment}) A global minimum $\wb^*$ is said to be linearly stable if there exists a $C>0$ such that it holds for linearized dynamics that $\EE[L(\wb_t)] \leq C \cdot \EE[L(\wb_0)] ~\forall t \geq 0$, with $\wb_0$ being sufficiently close to $\wb^*$.
\end{definition}
Finally, the theorem follows.
\begin{theorem}\label{thm:wu_thm}
    (Theorem 3.3 in \citet{wu2022alignment}) Let $\wb^*$ be a global minima that is linearly stable (Definition \ref{def:wu_stable}). Denote by $\mu(\wb)$ the alignment factors for linearized SGD (Equation \ref{eqn:wu_alignment}). If $\mu(\wb) > \mu_0$, then $\|\Hb(\wb^*)\|_F \leq \frac{1}{\eta} \sqrt{\frac{B}{\mu_0}}$.
\end{theorem}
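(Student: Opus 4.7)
I will prove the contrapositive: assuming $\|\Hb(\wb^*)\|_F > \frac{1}{\eta}\sqrt{B/\mu_0}$ together with the alignment condition $\mu(\wb^*) > \mu_0$, I exhibit an initialization arbitrarily close to $\wb^*$ for which $\EE L(\wb_t)$ grows without bound, contradicting Definition~\ref{def:wu_stable}. I take the initialization $\wb_0 = \wb^*$ so that $L(\wb_0) = L(\wb^*)$ and stability forces $\EE L(\wb_t) \le C \cdot L(\wb^*)$ for all $t$.

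\textbf{Step 1 (Covariance recursion).} Set $\vb_t := \wb_t - \wb^*$. The linearized SGD recursion becomes $\vb_{t+1} = (\Ib - \eta \Hbhat_t)\vb_t - \eta \gbhat_t$, where $\Hbhat_t := \tfrac{1}{B}\sum_{i \in \Scal_t}\nabla^2\ell_i(\wb^*)$ and $\gbhat_t := \tfrac{1}{B}\sum_{i \in \Scal_t}\nabla \ell_i(\wb^*)$, with $\EE\Hbhat_t = \Hb$ and $\EE\gbhat_t = \nabla L(\wb^*) = 0$ since $\wb^*$ is a global minimum. Because fresh batches are independent of $\vb_t$ and $\EE\gbhat_t = 0$, an induction gives $\EE\vb_t = 0$, which makes the cross terms in the second-moment computation vanish. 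Writing $\Vb_t := \EE[\vb_t\vb_t^T]$, I obtain the linear matrix recurrence $\Vb_{t+1} = \mathcal{A}(\Vb_t) + \eta^2 \Sigmab_g$, with $\mathcal{A}(\Mb) := \EE[(\Ib-\eta\Hbhat)\Mb(\Ib-\eta\Hbhat)]$ a positivity-preserving linear operator on symmetric matrices, and $\Sigmab_g := \EE[\gbhat\gbhat^T]$ which equals $\Sigmab$ up to a sampling-dependent factor of order $1/B$. Unrolling gives $\Vb_t = \eta^2 \sum_{s=0}^{t-1}\mathcal{A}^s(\Sigmab_g)$.

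\textbf{Step 2 (Loss in terms of the recursion).} Taylor-expanding $L$ at $\wb^*$ and using $\nabla L(\wb^*) = 0$ yields $\EE L(\wb_t) - L(\wb^*) = \tr[\Gb(\wb^*) \Vb_t]$. Since $\mathcal{A}$ is self-adjoint under the trace inner product, I rewrite this as
$$ \EE L(\wb_t) - L(\wb^*) \;=\; \eta^2 \sum_{s=0}^{t-1} \tr\bigl[\mathcal{A}^s(\Gb) \,\Sigmab_g\bigr]. $$
Stability of $\wb^*$ forces the right-hand side to stay bounded as $t \to \infty$, so it suffices to show it diverges under the assumed hypotheses.

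\textbf{Step 3 (Spectral lower bound on $\mathcal{A}$).} Decompose $\mathcal{A}(\Mb) = (\Ib-\eta\Hb)\Mb(\Ib-\eta\Hb) + \eta^2 \mathcal{R}(\Mb)$, where $\mathcal{R}(\Mb) = c_B \sum_i \Hb_i \Mb \Hb_i$ is the noise-amplification operator with $c_B = \Theta(1/B)$. Both summands are positivity-preserving, so $\mathcal{A}^s(\Gb) \succeq (\eta^2\mathcal{R})^s(\Gb)$ in the Loewner order, and iterating the Rayleigh quotient gives a geometric lower bound with ratio $\rho \ge \tr[\Gb \cdot \eta^2\mathcal{R}(\Gb)] / \tr[\Gb^2]$. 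In the Gauss--Newton regime $\Hb_i \propto \nabla f_i \nabla f_i^T$ relevant for MSE, a Cauchy--Schwarz argument on $\sum_i (\nabla f_i^T \Gb \nabla f_i)^2 \ge \tfrac{1}{n}\bigl(\sum_i \nabla f_i^T \Gb \nabla f_i\bigr)^2 = n\,\tr[\Gb^2]^2$ yields $\rho \ge \eta^2 \|\Hb\|_F^2 / (c' B)$ for an absolute constant $c'$.

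\textbf{Step 4 (Closing via the alignment hypothesis).} The alignment assumption $\mu(\wb^*) > \mu_0$, i.e.\ $\tr[\Sigmab\Gb] > 2 L(\wb^*) \|\Gb\|_F^2 \mu_0 > 0$, ensures that the noise injection satisfies $\tr[\Gb \Sigmab_g] > 0$ strictly. Combining this with the geometric amplification from Step~3 shows that $\rho \ge 1$ whenever $\eta^2 \mu_0 \|\Hb\|_F^2 > B$, which is exactly the assumed regime. Hence the partial sums $\eta^2 \sum_{s<t} \tr[\mathcal{A}^s(\Gb) \Sigmab_g]$ diverge, so $\EE L(\wb_t) \to \infty$, contradicting the hypothesized linear stability of $\wb^*$.

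\textbf{Main obstacle.} The crux is Step~3. The operator $\mathcal{A}$ intertwines the deterministic shrinkage $(\Ib-\eta\Hb)(\cdot)(\Ib-\eta\Hb)$ with the noise-amplification $\eta^2\mathcal{R}$, and discarding the deterministic part entirely is generically lossy. Obtaining the sharp constant that matches $B/\mu_0$ requires either (i) restricting the Rayleigh quotient to the leading eigendirection of $\mathcal{A}$ inside the top eigenspace of $\Hb$, or (ii) expanding $\mathcal{A}^s$ as a noncommutative polynomial in $(\Ib-\eta\Hb)$ and $\mathcal{R}$ and retaining the rank-one modes aligned with $\Gb$. A secondary subtlety is bookkeeping the precise value of $c_B$ for Bernoulli versus $B$-subset sampling; both give $c_B = \Theta(1/B)$, but the exact constant must be tracked to avoid contaminating the final bound.
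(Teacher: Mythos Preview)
The paper you are working from does \emph{not} contain a proof of this statement: Theorem~\ref{thm:wu_thm} is simply a restatement of Theorem~3.3 of \citet{wu2022alignment}, quoted in Appendix~\ref{section:wu_summary} solely for purposes of comparison with the authors' own Theorem~\ref{thm:divergence_simple}. There is therefore no ``paper's own proof'' to compare your proposal against; any proof would have to be checked against the original reference.

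That said, your sketch has a genuine gap that you yourself partially flag. The alignment factor $\mu_0$ enters the theorem through $\tr[\Sigmab\Gb]/(2L\|\Gb\|_F^2)$, i.e.\ it couples the \emph{noise covariance} $\Sigmab$ to the geometry $\Gb$. In Step~3 you bound the amplification ratio $\rho$ using only $\Gb$ and the $\Hb_i$'s, obtaining $\rho \gtrsim \eta^2\|\Hb\|_F^2/B$ with no $\mu_0$ anywhere; then in Step~4 you assert that this ``shows $\rho \ge 1$ whenever $\eta^2\mu_0\|\Hb\|_F^2 > B$''. That inference does not follow: you have neither shown that your hidden constant $c'$ equals $1/\mu_0$ nor explained how the $\Sigmab$--$\Gb$ alignment controls the Rayleigh quotient of $\mathcal{R}$ on $\Gb$. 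Relatedly, the quantity you need to show diverges is $\sum_s \tr[\mathcal{A}^s(\Gb)\,\Sigmab_g]$, whereas your Rayleigh/Jensen bound in Step~3 controls $\tr[\Gb\,\mathcal{A}^s(\Gb)]$; the two test directions $\Sigmab_g$ and $\Gb$ are different, and the alignment hypothesis is precisely what relates them. A correct argument must thread $\mu_0$ through this step rather than invoking it only at the end.

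A secondary issue: the theorem as stated requires $\mu(\wb) > \mu_0$ along the linearized trajectory, not merely at $\wb^*$; your contrapositive uses only $\mu(\wb^*)$, which may not suffice depending on how the original authors intend the hypothesis.
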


\end{document}